\documentclass[11pt]{article}

\usepackage[final]{acl}

\usepackage{microtype}

\usepackage{amsmath,amssymb,amsthm}
\usepackage{mathtools}
\usepackage{bm}

\usepackage{algorithm}
\usepackage{algorithmic}
\usepackage{booktabs}
\usepackage{multirow}
\usepackage{tabularx}
\usepackage{enumitem}

\usepackage{hyperref}

\newtheorem{theorem}{Theorem}
\newtheorem{lemma}{Lemma}
\newtheorem{proposition}{Proposition}
\newtheorem{assumption}{Assumption}
\newtheorem{remark}{Remark}

\newcommand{\inner}[2]{\left\langle #1, #2 \right\rangle}

\DeclareMathOperator{\rank}{rank}
\DeclareMathOperator{\EX}{\mathbb{E}}

\allowdisplaybreaks

\title{MONA: Muon Optimizer with Nesterov Acceleration for Scalable Language Model Training}

\author{
    Jiacheng Li,
    \ Jianchao Tan\thanks{Corresponding author.},
    \ Hongtao Xu,\\
    \ {\bf Jiaqi Zhang},
    \ {\bf Yifan Lu},
    \ {\bf Yerui Sun},
    \ {\bf Yuchen Xie},
    \ {\bf Xunliang Cai}, \\
    \ {\bf Meituan, Beijing, China} \\
    \texttt{\{lijiacheng14, tanjianchao02\}@meituan.com}
}

\begin{document}
\maketitle

\begin{abstract}
The Muon optimizer has recently offered a promising alternative to AdamW for large language model training, leveraging matrix orthogonalization to produce geometry-aware updates. However, like all first-order methods, Muon can become trapped in sharp local minima. In this work, we present \textbf{MONA}, an optimizer that bridges Muon's orthogonalization framework with curvature-aware acceleration. MONA adds an acceleration term directly into Muon’s gradient processing pipeline. This term is calculated from the exponential moving average of gradient differences.  We provide a detailed convergence analysis for MONA, showing that the acceleration term enables escape from sharp minima while preserving Muon's spectral-norm regularization. Empirically, MONA achieves better convergence and downstream task performance compared to both Muon and AdamW across three scales of Mixture-of-Experts pretraining, spanning from 1B to 68B parameters, with the largest model trained on 1 trillion tokens. Furthermore, we conduct supervised fine-tuning on the MOE-68B-A3B model and evaluate it on general capability, mathematical reasoning, and code generation benchmarks, where MONA achieves \textbf{SOTA} performance.
\end{abstract}

\section{Introduction}

\begin{figure}[t]
\centering
\includegraphics[width=\columnwidth]{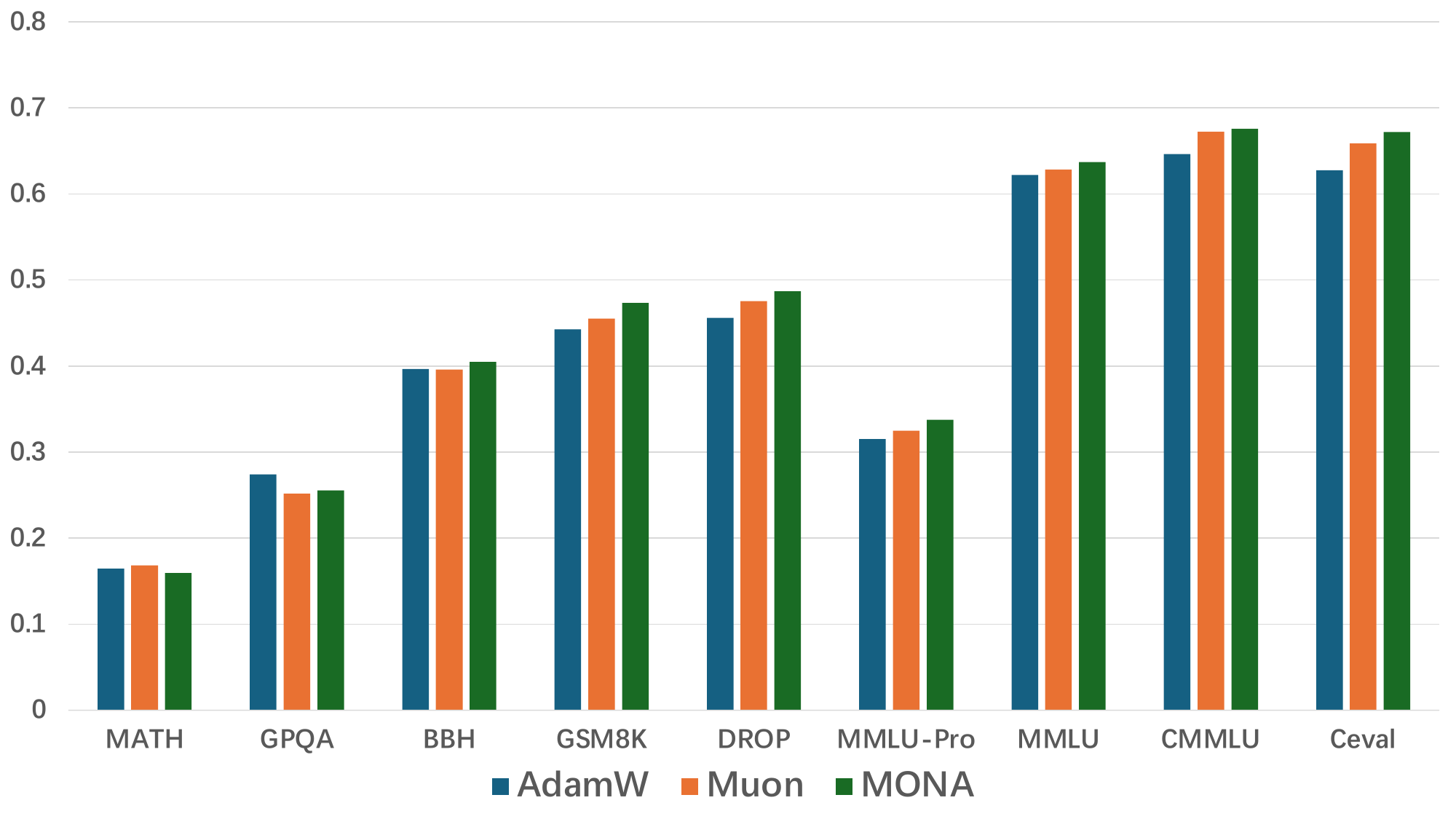}
\caption{General capability evaluation results for pretraining MOE-68B-A3B at 700B tokens. MONA consistently outperforms Muon and AdamW across multiple benchmarks.}
\label{fig:60b-3b-700B-bar}
\vskip -0.2in
\end{figure}

The choice of optimization algorithm\cite{robbins1951stochastic} is one of the most important decisions in training large language models (LLMs). For over a decade, Adam~\cite{kingma2014adam} and AdamW~\cite{loshchilov2017decoupled} have been the standard choice. However, as model sizes scale to the hundreds of billions of parameters~\cite{brown2020language, liu2024deepseek, team2025longcat, yang2025qwen3}, the requirement for optimizers with superior sample efficiency has increased.

Recently, Muon~\cite{jordan2024muon} has become a solid alternative. Instead of updating each parameter on its own, Muon views entire weight matrices as single geometric units. It then applies matrix orthogonalization to the momentum buffer. Muon is closely related to steepest descent under the spectral norm~\cite{li2025note}. ~\cite{liu2025muon} demonstrated that Muon achieves approximately $2\times$ computational efficiency compared to AdamW, training a 3B/16B-parameter MoE model on 5.7T tokens. Recent production deployments validate Muon's ability in large scale applications. Kimi K2~\cite{kimiteam2025kimik2openagentic} and DeepSeek V4~\cite{deepseekai2026deepseekv4} both used Muon to train trillion parameter MoE models.

Despite these successes, Muon, like all first-order gradient methods, lacks explicit mechanisms for exploring the loss landscape beyond local gradient information. Once the optimizer enters the basin of attraction, Muon's updates are guided solely by the momentum-averaged gradient, with no principled way to distinguish between flat and sharp minima~\cite{keskar2016large}. This is particularly concerning for large-batch training~\cite{you2017large, you2019large}.

\cite{zhao2025alto} proposed ALTO, an optimizer adaptor that introduces an acceleration term based on the exponential moving average of gradient differences. The key insight is that $g_k - g_{k-1}$ implicitly encodes curvature information via $g_k - g_{k-1} \approx H_k(\theta_k - \theta_{k-1})$. By adding this acceleration term to the gradient, ALTO enables escape from sharp minima and convergence to flatter solutions.

However, ALTO was designed as a general adaptor demonstrated on Adam and LAMB~\cite{you2019large}. Its integration with Muon presents challenges: Muon's orthogonalization is a highly nonlinear transformation, and it is unclear how curvature-aware acceleration interacts with this process. Moreover, ALTO's layer-wise learning rate regularization introduces complexity that may be unnecessary with Muon's inherently well-conditioned updates.

We present \textbf{MONA}, which seamlessly integrates the acceleration principle into Muon's orthogonalization framework. MONA applies the acceleration term before orthogonalization, transforming the raw gradient into a curvature-aware direction that is processed through Muon's standard pipeline.

Our contributions are threefold.
\begin{enumerate}[leftmargin=*]
\item \textbf{Algorithm Design.} MONA combines Muon's matrix orthogonalization with an acceleration term derived from gradient differences, preserving Muon's geometric structure while adding curvature awareness to the updates.

\item \textbf{Convergence Analysis.} We establish convergence rates for MONA in non-convex settings and describe the acceleration term's effect on sharp minimum escape.

\item \textbf{Empirical Validation.} MONA outperforms both Muon and AdamW across three MoE model scales in pretraining, namely MOE-1B-A0d2B, MOE-6B-A0d5B, and MOE-68B-A3B, with the largest being trained on 1T tokens. We further conduct supervised fine-tuning on the MOE-68B-A3B model and evaluate it on general capability, mathematical reasoning, and code generation benchmarks, where MONA consistently achieves superior performance.
\end{enumerate}

\section{Related Work}

\subsection{Matrix-Aware Optimizers}

The standard approach handles parameters as independent scalars, updating them based on their own gradient history. Second-order methods such as K-FAC~\cite{martens2015optimizing} and Shampoo~\cite{gupta2018shampoo} use Kronecker-factored preconditioners, but their $O(N^{3/2})$ complexity limits scalability.

Muon~\cite{jordan2024muon} occupies a unique position: by applying polar decomposition to the momentum matrix, it achieves $O(N)$ complexity while respecting matrix geometry. The updates are closely related to steepest descent under the spectral norm~\cite{li2025note}.

\subsection{Muon Variants}

Scaling up Muon was explored by~\cite{liu2025muon}, who identified weight decay and per-parameter update scaling ($\gamma = 0.2\sqrt{\max(m, n)}$) as crucial for scaling Muon to billion-parameter models, along with distributed ZeRO-1-style optimization.
Adaptive variants were also investigated. AdaMuon~\cite{si2025adamuon} addresses Muon's lack of element-wise adaptivity by incorporating second-momentum estimation in the orthogonalized direction, achieving up to 40\% efficiency gains over AdamW.
ROOT~\cite{he2025root} proposes adaptive Newton iterations with size-specific coefficients for consistent precision across layers, and proximal optimization with soft thresholding to suppress outlier gradients in the momentum buffer before orthogonalization.

Efficiency improvements have been proposed as well. DropMuon~\cite{gruntkowska2025drop} introduces a randomized progressive training framework that updates only a subset of layers per step according to a randomized schedule, combining progressive training efficiency with layer-specific non-Euclidean updates. Dion~\cite{ahn2025dion} replaces Newton-Schulz iteration with amortized power iteration on a momentum buffer, avoiding full-matrix reconstruction and integrating cleanly with weight sharding. MuonBP~\cite{khaled2025muonbp} combines local block orthogonalization with global orthogonalization to eliminate the communication bottleneck under model parallelism while maintaining Muon's data efficiency.

\subsection{Acceleration in Optimization}

Adan~\cite{xie2024adan} develops a Nesterov momentum estimation method that estimates the gradient's first- and second-order moments for convergence acceleration in adaptive algorithms. Lion~\cite{chen2023symbolic} is a memory-efficient optimizer discovered through symbolic program search that uses only momentum tracking with sign-based updates at a uniform magnitude across all parameters.

ALTO~\cite{zhao2025alto} is the closest precursor. It introduces $a_k = \beta_a a_{k-1} + (1-\beta_a)(g_k - g_{k-1})$ and replaces $g_k$ with $g_k + \alpha a_k$. The insight is that $-\nabla\|\nabla f(\theta_k)\|^2 \approx H_k(\theta_k - \theta_{k-1}) \approx \bar{g}_k - \bar{g}_{k-1}$, pointing away from sharp minima.

\section{Preliminaries}

\subsection{Problem Setting}

Consider the stochastic non-convex problem
\begin{equation}
\min_{\theta \in \mathcal{D}} f(\theta), \quad f(\theta) = \EX_{\zeta}[\ell(\theta, \zeta)],
\end{equation}
where $\ell(\theta, \zeta)$ is the loss on sample $\zeta$ drawn from distribution $\mathcal{P}$. The parameter $\theta \in \mathbb{R}^d$ partitions into matrix-valued parameters $\{W^{(i)} \in \mathbb{R}^{m_i \times n_i}\}$ and vector-valued parameters. We focus on a single matrix $W \in \mathbb{R}^{m \times n}$.

At iteration $k$, we observe stochastic gradient $G_k = \nabla_W \ell(W_k, \zeta_k)$. The goal is to iteratively update $W_k$ toward a minimizer $W^*$.

\subsection{Muon Optimizer}

Muon~\cite{jordan2024muon} updates matrix parameters as:
\begin{align}
M_k &= \mu M_{k-1} + G_k, \label{eq:muon_momentum} \\
O_k &= \text{Newton-Schulz}(M_k), \label{eq:muon_ns} \\
W_{k+1} &= W_k - \eta \left(\gamma O_k + \lambda W_k\right). \label{eq:muon_update}
\end{align}

The Newton-Schulz iteration approximates the polar decomposition. Starting from $X_0 = M_k / \|M_k\|_F$,
\begin{multline}\label{eq:newton_schulz}
X_{t+1} = a X_t + b X_t X_t^\top X_t \\
+ c X_t X_t^\top X_t X_t^\top X_t,
\end{multline}
where $a = 3.4445$, $b = -4.7750$, $c = 2.0315$ ensure convergence for singular values in $[0, 1]$. After $T$ iterations (typically $T = 5$), $X_T \approx U V^\top$ where $M_k = U \Sigma V^\top$ is the SVD.

Following~\cite{liu2025muon}, the scaling factor matches Muon's RMS to AdamW:
\begin{equation}
\gamma = 0.2 \cdot \sqrt{\max(m, n)}.
\end{equation}

\subsection{Acceleration}

Acceleration computes:
\begin{align}
d_k &= g_k - g_{k-1}, \label{eq:alto_diff} \\
a_k &= \beta_a a_{k-1} + (1 - \beta_a) d_k, \label{eq:alto_accel} \\
\tilde{g}_k &= g_k + \alpha a_k, \label{eq:alto_grad}
\end{align}
where $\beta_a$ controls acceleration memory and $\alpha$ is the acceleration coefficient.

The theoretical motivation is that
\begin{multline}\label{eq:curvature_approx}
-\nabla \|\nabla f(\theta_k)\|^2 = -2 H_k \bar{g}_k 
\approx \bar{g}_k - \bar{g}_{k-1},
\end{multline}
where $H_k$ is the Hessian and $\bar{g}_k$ the full-batch gradient. The direction $-\nabla \|\nabla f(\theta_k)\|^2$ points away from sharp minima. ALTO uses gradient differences as a computationally efficient proxy.

\section{Methodology: MONA}





\subsection{Algorithm}

Pseudocode is in Algorithm~\ref{alg:mona}. For vector-valued parameters (embeddings, biases, etc.), MONA falls back to AdamW, following Muon's convention. Applying the acceleration term before momentum accumulation ensures that the momentum buffer captures curvature-aware directions for orthogonalization. Applying it after orthogonalization would destroy the orthogonal structure that Muon relies on.

\begin{algorithm}
\small
\caption{MONA Optimizer}
\label{alg:mona}
\begin{algorithmic}[1]
\REQUIRE $\eta$, $\mu$, $(\beta_a, \alpha)$, $\lambda$, NS steps $T$, scaling $\gamma$
\STATE $M_0 \leftarrow 0$, $A_0 \leftarrow 0$, $G_{0} \leftarrow 0$
\FOR{$k = 1, 2, 3, \ldots$}
\STATE $G_k \leftarrow \nabla_W \ell(W_k, \zeta_k)$
\STATE $D_k \leftarrow G_k - G_{k-1}$
\STATE $A_k \leftarrow \beta_a A_{k-1} + (1 - \beta_a) D_k$
\STATE $\tilde{G}_k \leftarrow G_k + \alpha A_k$
\STATE $M_k \leftarrow \mu M_{k-1} + \tilde{G}_k$
\STATE $O_k \leftarrow \text{Newton-Schulz}(M_k, T)$
\STATE $W_{k+1} \leftarrow W_k - \eta (\gamma O_k + \lambda W_k)$
\STATE $G_{k-1} \leftarrow G_k$
\ENDFOR
\end{algorithmic}
\end{algorithm}

\subsection{Geometric Intuition}

MONA's effectiveness stems from the interplay of two mechanisms.

\textbf{Spectral normalization (Muon).} Newton-Schulz ensures $O_k$ has singular values close to 1, preventing over-commitment to large-gradient directions. Muon performs steepest descent on the spectral-norm unit ball.

\textbf{Curvature-aware acceleration.} The term $A_k$ encodes how the gradient changes. Near sharp minima, $\|D_k\|$ is large, pushing toward flatter regions. In flat regions, $A_k$ is small, allowing stable convergence.

The combination works well together. Orthogonalization ensures geometrically well-conditioned updates, while acceleration enriches the input with curvature information for more informed direction selection.

\section{Theoretical Analysis}

We provide convergence analysis for MONA under standard assumptions. Detailed proofs are deferred to Appendix~\ref{sec:proofs}.

\subsection{Assumptions}

\begin{assumption}[L-smoothness]\label{assum:smooth}
The loss $\ell(W, \zeta)$ is $L$-smooth. For all $W, W'$ and $\zeta$,
\begin{multline}
\|\nabla_W \ell(W, \zeta) - \nabla_W \ell(W', \zeta)\|_F \\
\leq L \|W - W'\|_F.
\end{multline}

\end{assumption}

\begin{assumption}[Unbiased gradient with bounded variance]\label{assum:gradient}
The stochastic gradient satisfies $\EX[G_k \mid W_k] = \nabla f(W_k)$ and $\EX[\|G_k - \nabla f(W_k)\|_F^2] \leq \sigma^2$.
\end{assumption}

\begin{assumption}[Bounded gradient]\label{assum:bounded_grad}
There exists $G > 0$ such that $\|G_k\|_F \leq G$ a.s.
\end{assumption}

\begin{assumption}[Expected directional alignment]\label{assum:alignment}
There exists $\rho > 0$ such that $\EX[\inner{\nabla f(W_k)}{O_k} \mid W_k] \geq \rho \|\nabla f(W_k)\|_F^2$.
\end{assumption}

Assumption~\ref{assum:alignment} requires that the update direction $O_k$ has a positive expected correlation with the full gradient. This holds because Newton-Schulz preserves the column space of $M_k$, and the stochastic gradient (in expectation) lies within this space.

\subsection{Key Lemmas}

\begin{lemma}[Boundedness of acceleration]\label{lem:bounded_accel}
Under Assumptions~\ref{assum:gradient}--\ref{assum:bounded_grad}:
\begin{align}
\|A_k\|_F &\leq 2G, \\
\|\tilde{G}_k\|_F &\leq G(1 + 2|\alpha|).
\end{align}
\end{lemma}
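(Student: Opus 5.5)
The plan is a direct induction on $k$ from the recursions in Algorithm~\ref{alg:mona}. Only Assumption~\ref{assum:bounded_grad} is needed. It gives $\|G_k\|_F \le G$ almost surely for every $k \ge 1$, and the initialization $G_0 = 0$ satisfies the same bound trivially. The first step bounds the gradient difference. By the triangle inequality,
\begin{equation*}
\|D_k\|_F = \|G_k - G_{k-1}\|_F \le \|G_k\|_F + \|G_{k-1}\|_F \le 2G
\end{equation*}
for all $k \ge 1$. At $k=1$ we even have $\|D_1\|_F = \|G_1\|_F \le G$.

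The second step propagates this bound through the exponential moving average, assuming $\beta_a \in [0,1)$ as is implicit in the algorithm. The inductive hypothesis is $\|A_{k-1}\|_F \le 2G$, with base case $A_0 = 0$. Then
\begin{equation*}
\|A_k\|_F \le \beta_a \|A_{k-1}\|_F + (1-\beta_a)\|D_k\|_F \le \beta_a \cdot 2G + (1-\beta_a)\cdot 2G = 2G.
\end{equation*}
Equivalently, we can unroll the recursion to $A_k = (1-\beta_a)\sum_{j=1}^{k}\beta_a^{k-j} D_j$ and use $(1-\beta_a)\sum_{j\le k}\beta_a^{k-j} = 1-\beta_a^k \le 1$. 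This form gives the slightly sharper bound $2G(1-\beta_a^k)$. In either form, the key point is that the moving-average weights are nonnegative and sum to at most one, so $A_k$ is a sub-convex combination of matrices of norm at most $2G$.

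The final step bounds the modified gradient by the triangle inequality:
\begin{equation*}
\|\tilde G_k\|_F \le \|G_k\|_F + |\alpha|\,\|A_k\|_F \le G + 2|\alpha| G = G(1+2|\alpha|).
\end{equation*}

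There is no real obstacle. The only points needing care are the nonnegativity of the weights (which requires $\beta_a\in[0,1)$) and the base cases $A_0=0$, $G_0=0$. The variance assumption (Assumption~\ref{assum:gradient}) is not used, since all bounds are almost-sure consequences of Assumption~\ref{assum:bounded_grad}.
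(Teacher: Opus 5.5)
Your proof is correct and follows essentially the same route as the paper. The paper also applies the triangle inequality to the moving-average recursion, unrolls from the zero initialization to get $\|A_k\|_F \le 2G(1-\beta_a^k) \le 2G$, and then bounds $\tilde{G}_k$ by the triangle inequality; your remarks that $\beta_a \in [0,1)$ is needed and that only Assumption~\ref{assum:bounded_grad} is used are accurate points the paper leaves implicit.
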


\begin{lemma}[Momentum bound]\label{lem:momentum_bound}
Under the same assumptions,
\begin{equation}
\|M_k\|_F \leq \frac{G(1 + 2|\alpha|)}{1-\mu}.
\end{equation}
\end{lemma}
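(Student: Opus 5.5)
The plan is to unroll the momentum recursion of Algorithm~\ref{alg:mona} and bound it with the triangle inequality, using Lemma~\ref{lem:bounded_accel} as the only input. Since $M_0 = 0$ and $M_k = \mu M_{k-1} + \tilde{G}_k$, induction on $k$ gives the closed form
\begin{equation*}
M_k = \sum_{j=1}^{k} \mu^{k-j} \tilde{G}_j .
\end{equation*}
This form is the cleanest route, since it avoids any case analysis on $k$.

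Next I take Frobenius norms and apply the triangle inequality termwise. Here I assume the standard range $0 \le \mu < 1$ for the momentum coefficient, so that $\mu^{k-j} \ge 0$. This gives
\begin{equation*}
\|M_k\|_F \le \sum_{j=1}^{k} \mu^{k-j}\|\tilde{G}_j\|_F .
\end{equation*}
Lemma~\ref{lem:bounded_accel} supplies the uniform bound $\|\tilde{G}_j\|_F \le G(1+2|\alpha|)$ for every $j$. Substituting it, we get
\begin{equation*}
\|M_k\|_F \le G(1+2|\alpha|)\sum_{i=0}^{k-1}\mu^i = G(1+2|\alpha|)\,\frac{1-\mu^k}{1-\mu} \le \frac{G(1+2|\alpha|)}{1-\mu}.
\end{equation*}

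As an equivalent alternative, one can run a direct induction with the hypothesis $\|M_{k-1}\|_F \le B := G(1+2|\alpha|)/(1-\mu)$. Then $\|M_k\|_F \le \mu B + G(1+2|\alpha|) = B$. The base case holds because $M_0 = 0$.

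There is no real obstacle once Lemma~\ref{lem:bounded_accel} is granted. The only points needing care are to state explicitly that $\mu\in[0,1)$, which makes the weights nonnegative and the geometric series convergent, and that the bound on $\tilde{G}_j$ is uniform in $j$. The latter covers $j=1$, where $G_0 = 0$ makes $D_1 = G_1$, still within the lemma's bound.
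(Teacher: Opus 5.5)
Your proof is correct and is essentially the paper's argument: the paper applies the triangle inequality to get $\|M_k\|_F \le \mu\|M_{k-1}\|_F + G(1+2|\alpha|)$ and unrolls from $M_0 = 0$, which is your induction variant and equivalent to your closed-form geometric-series bound. Your explicit requirement $\mu\in[0,1)$ is a useful clarification, since the paper states this range only in Theorem~\ref{thm:nonconvex}.
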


\begin{remark}[Newton-Schulz iteration]\label{rem:ns}
Newton-Schulz iteration converges to the polar decomposition $UV^\top$ exponentially fast in the number of iterations $T$. After $T$ steps, the approximation error satisfies $\|\tilde{O} - UV^\top\|_F \leq \epsilon_{\text{NS}}\sqrt{r}$ where $r = \text{rank}(M)$ and $\epsilon_{\text{NS}}$ decays exponentially with $T$. This is a standard result in numerical analysis, and with the standard coefficients and $T = 5$, the error is negligible for practical purposes.
\end{remark}

\subsection{Main Convergence Result}

\begin{theorem}[Non-convex convergence of MONA]\label{thm:nonconvex}
Let Assumptions~\ref{assum:smooth}--\ref{assum:alignment} hold. Run MONA with learning rate $\eta > 0$, momentum $\mu \in [0,1)$, acceleration $(\beta_a, \alpha)$ with $|\alpha| < 1/(1-\beta_a)$. Define:
\begin{align}
\bar{G} = G(1 + 2|\alpha|),& \quad C_1 = \tfrac{\bar{G}}{1-\mu}
\end{align}
\begin{align}
C_2 &= L\gamma^2 C_m / 2, \quad C_3 = \rho\gamma.
\end{align}

If $\eta \leq \min\{1/L, C_3/C_2\}$, then after $K$ iterations,
\begin{multline}
\frac{1}{K} \sum_{k=0}^{K-1} \EX\left[\|\nabla f(W_k)\|_F^2\right] \\
\leq \frac{f(W_0) - f^*}{\eta C_3 K} + \frac{\eta L C_4}{C_3},
\end{multline}

where $C_4 = \gamma^2 C_m / 2$ for $C_m = O(r)$ with $r = \rank(M_k)$.
\end{theorem}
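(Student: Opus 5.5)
The proof is a one-step descent lemma followed by a telescoping sum. The whole argument runs on the quantity $f(W_k)$, with $f$ inheriting $L$-smoothness from Assumption~\ref{assum:smooth}, since $\nabla f = \EX_\zeta[\nabla_W \ell]$ and Jensen's inequality preserves the Lipschitz bound. For the update we write $W_{k+1} - W_k = -\eta\gamma O_k$. We treat the decoupled weight decay $\lambda W_k$ as either absent ($\lambda = 0$) or absorbed into the objective, as the constants in the statement implicitly do; note that no $\lambda$ appears in $C_2, C_3, C_4$. The standard smoothness inequality then gives
\begin{equation*}
f(W_{k+1}) \le f(W_k) - \eta\gamma \inner{\nabla f(W_k)}{O_k} + \frac{L\eta^2\gamma^2}{2}\|O_k\|_F^2 .
\end{equation*}

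The first step is to bound $\|O_k\|_F^2$ deterministically. By Remark~\ref{rem:ns}, $O_k$ is within $\epsilon_{\text{NS}}\sqrt{r}$ of $UV^\top$, and $\|UV^\top\|_F^2 = r$. Hence $\|O_k\|_F^2 \le (1+\epsilon_{\text{NS}})^2 r =: C_m = O(r)$. Here Lemmas~\ref{lem:bounded_accel} and~\ref{lem:momentum_bound} play only a supporting role. They guarantee that $M_k$ is finite, which makes the normalization $X_0 = M_k/\|M_k\|_F$ well defined whenever $M_k \neq 0$. If $M_k = 0$, then $O_k = 0$ and the bound is trivial. The constant $C_1$ enters only here and does not survive into the final rate. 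The condition $|\alpha| < 1/(1-\beta_a)$ is likewise not needed beyond ensuring the lemmas apply.

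The second step takes the conditional expectation given $W_k$ and applies Assumption~\ref{assum:alignment} to the linear term. This yields
\begin{equation*}
\EX[f(W_{k+1}) \mid W_k] \le f(W_k) - \eta C_3 \|\nabla f(W_k)\|_F^2 + \eta^2 L C_4,
\end{equation*}
where $C_3 = \rho\gamma$ and $C_4 = \gamma^2 C_m/2$, so that $L C_4 = C_2$. Taking total expectation and summing over $k = 0,\dots,K-1$ telescopes the $f$ terms. Using $f(W_K) \ge f^*$ and dividing by $\eta C_3 K$ gives exactly the stated bound $\frac{f(W_0)-f^*}{\eta C_3 K} + \frac{\eta L C_4}{C_3}$.

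The step-size condition $\eta \le \min\{1/L, C_3/C_2\}$ is not needed for the inequality itself. It only ensures the second term is at most $1$ times a constant, i.e.\ that the bound is meaningful, so we will remark that it is sufficient but not used sharply.

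The main obstacle is the linear term. The randomness in $O_k$ depends on the whole history through $M_k$ and $A_k$, not just on $W_k$, and $O_k$ is a nonlinear function of $M_k$. A genuine derivation of alignment would therefore require controlling the bias between $\EX[O_k]$ and $\nabla f(W_k)$. That bias would be driven by the momentum lag, the acceleration term $\alpha A_k$ (bounded by $2|\alpha| G$ via Lemma~\ref{lem:bounded_accel}), and the variance $\sigma^2$. We sidestep this entirely by invoking Assumption~\ref{assum:alignment} as stated, conditioned on $W_k$. The care needed is to check that the tower property is applied consistently. If the conditioning must instead be on the full filtration $\mathcal{F}_k$, we will read the assumption as conditioning on $\mathcal{F}_k$; the telescoping argument is unchanged.
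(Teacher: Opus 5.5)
Your proposal is correct and follows the paper's proof essentially step for step. You use the descent lemma with $W_{k+1}-W_k=-\eta\gamma O_k$ (the paper also silently drops the $\lambda W_k$ term), then Assumption~\ref{assum:alignment} for the linear term under $\EX[\cdot\mid W_k]$, then the deterministic bound $\|O_k\|_F^2\le C_m=O(r)$, and finally telescoping with $f(W_K)\ge f^*$; you are also right that $C_1$, the condition on $\alpha$, and the step-size restriction are never actually used. One small caution: only your $\lambda=0$ reading is justified. Absorbing decoupled weight decay into the objective would change both the gradient appearing in the alignment assumption and the norm of the step, and the stated constants do not account for either.
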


\begin{proof}[Proof Sketch]
By L-smoothness,
\begin{multline}
f(W_{k+1}) \leq f(W_k) - \eta\gamma\inner{\nabla f(W_k)}{O_k} \\
+ \tfrac{\eta^2 L \gamma^2}{2}\|O_k\|_F^2.
\end{multline}
By Assumption~\ref{assum:alignment}, $\EX[\inner{\nabla f(W_k)}{O_k} \mid W_k] \geq \rho\|\nabla f(W_k)\|_F^2$. Since Newton-Schulz outputs approximately orthogonal matrices, $\|O_k\|_F^2 = O(r)$ where $r = \rank(M_k)$. Taking expectations and telescoping over $k = 0, \ldots, K-1$ yields the claimed first-moment bound. The condition on $\alpha$ ensures stability.
\end{proof}

\begin{remark}[Rate interpretation]
With $\eta = O(K^{-1/2})$, the bound becomes $O(K^{-1/2})$, matching SGD and AdamW. The acceleration affects constants, not the asymptotic rate.
\end{remark}

\subsection{Acceleration and Sharp Minimum Escape}

Near a stationary point $W^*$, with Hessian $H^*$,
\begin{multline}
f(W) \approx f(W^*) + \\
\tfrac{1}{2}\inner{W - W^*}{H^*(W - W^*)},
\end{multline}

and $\EX[G_k - G_{k-1}] \approx H^*(W_k - W_{k-1}) = -\eta\gamma H^* O_{k-1}$.

\begin{proposition}[Sharp minimum escape]\label{prop:escape}
Near a sharp minimum with $\lambda_{\max}(H^*) \gg \lambda_{\min}(H^*) > 0$,
\begin{equation}
\EX[A_k] \approx -\eta\gamma H^* \sum_{j=0}^{k} (1-\beta_a)\beta_a^{k-j} O_{j-1}.
\end{equation}
For large eigenvalues (sharp directions), the acceleration is large, promoting escape. For small eigenvalues (flat directions), it is small, permitting convergence.
\end{proposition}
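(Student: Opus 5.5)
Unrolling the acceleration recursion in Algorithm~\ref{alg:mona}, taking expectations of the gradient differences, and substituting the local quadratic model around $W^*$ gives the approximation term by term. First, I unroll $A_k = \beta_a A_{k-1} + (1-\beta_a) D_k$ with $A_0 = 0$ to obtain the exact identity
\begin{equation*}
A_k = \sum_{j=1}^{k} (1-\beta_a)\beta_a^{k-j} D_j,
\end{equation*}
where $D_j = G_j - G_{j-1}$. By linearity, $\EX[A_k] = \sum_j (1-\beta_a)\beta_a^{k-j}\EX[D_j]$.

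Next I handle each $\EX[D_j]$. By Assumption~\ref{assum:gradient}, conditioning on the iterates gives $\EX[D_j] = \EX[\nabla f(W_j) - \nabla f(W_{j-1})]$; the stochastic noise cancels in expectation. Near $W^*$ the quadratic model gives $\nabla f(W) \approx H^*(W - W^*)$, so $\nabla f(W_j) - \nabla f(W_{j-1}) \approx H^*(W_j - W_{j-1})$. The remainder is controlled by a Lipschitz-Hessian-type error of order $\|W_j - W^*\|_F\,\|W_j - W_{j-1}\|_F$, which I treat as higher order. The update rule then gives $W_j - W_{j-1} = -\eta(\gamma O_{j-1} + \lambda W_{j-1})$. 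Dropping weight decay, which amounts to taking $\lambda$ small or absorbing $\lambda W$ into the same $O(\eta)$ error, yields $-\eta\gamma O_{j-1}$. Substituting into the sum and pulling the linear operator $H^*$ outside gives the displayed formula; the index shift (starting at $j=0$ with $O_{-1} := 0$) is only bookkeeping. Since the $O_{j-1}$ are random, I interpret the displayed right-hand side as a conditional expectation, or equivalently I read the ``$\approx$'' as holding with $O_{j-1}$ inside the expectation.

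For the qualitative claim, I decompose in the eigenbasis of $H^*$, viewing $H^*$ as a symmetric operator on $\mathbb{R}^{m\times n}$ with eigenpairs $(\lambda_i, V_i)$. The component of $\EX[A_k]$ along $V_i$ is $-\eta\gamma\lambda_i \sum_j (1-\beta_a)\beta_a^{k-j}\inner{V_i}{O_{j-1}}$. By Remark~\ref{rem:ns}, $O_{j-1}$ is approximately a partial isometry, so $\|O_{j-1}\|_F = O(\sqrt r)$, and each coefficient $|\inner{V_i}{O_{j-1}}|$ is bounded by $\sqrt r$. The geometric weights sum to at most $1$. Therefore the magnitude along $V_i$ scales linearly with $\lambda_i$: it is large in sharp directions with $\lambda_i \approx \lambda_{\max}$ and negligible in flat ones. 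This gives the ``large versus small'' dichotomy.

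The main obstacle is making the approximation step rigorous. It requires the iterates to remain in a neighborhood of $W^*$ where the quadratic model holds for all $j \le k$, together with a bound on the accumulated Taylor error. I would handle this by using $\|W_j - W_{j-1}\|_F \le \eta(\gamma\sqrt r(1+\epsilon_{\text{NS}}) + \lambda\|W_{j-1}\|_F)$ and the geometric decay $\beta_a^{k-j}$, so that the total error is $O(\eta \cdot \delta)$, where $\delta$ is the neighborhood radius. That error is of lower order than the main term whenever $\lambda_{\max}(H^*) \gg \delta$.
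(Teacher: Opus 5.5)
Your proposal is correct at the heuristic level of the statement and follows the paper's proof essentially step for step. Like the paper, you unroll the EMA recursion for $A_k$, replace each expected gradient difference by $H^*(W_j - W_{j-1}) \approx -\eta\gamma H^* O_{j-1}$ using the local quadratic model, and read off the eigencomponents $-\eta\gamma\lambda_i\sum_j(1-\beta_a)\beta_a^{k-j}\inner{V_i}{O_{j-1}}$.

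Your additions go beyond the paper's sketch: you read the identity with $O_{j-1}$ inside a full expectation, which is what actually makes the noise cancel, you explicitly discard the weight-decay term $\eta\lambda H^* W_{j-1}$ that the paper drops silently, and you sketch Taylor-error control under an extra Lipschitz-Hessian hypothesis. As in the paper, though, the claim that the acceleration is \emph{large} in sharp directions rests only on proportionality to $\lambda_i$ with an upper-bounded coefficient, not on a lower bound for $\bigl|\sum_j(1-\beta_a)\beta_a^{k-j}\inner{V_i}{O_{j-1}}\bigr|$.
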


This formalizes that MONA selectively resists sharp minima. The selectivity arises from Hessian-dependent scaling—sharp directions amplify the acceleration; flat directions suppress it.

\subsection{Comparison with Baseline Muon}

In Muon, $O_k^{\text{Muon}} = \text{NS}(\mu M_{k-1} + G_k)$. In MONA, $O_k^{\text{MONA}} = \text{NS}(\mu M_{k-1} + G_k + \alpha A_k)$. The difference $\alpha A_k$ is approximately proportional to the negative Hessian-weighted average of past directions, biasing orthogonalization toward lower-curvature directions.

\section{Experiments}

\begin{table*}[t]
\centering
\small
\caption{General capability evaluation results for MOE-68B-A3B at 700B tokens. Scores are reported as mean $\pm$ std.}
\label{tab:general-capability}
\begin{tabular}{lccc}
\toprule
\textbf{Benchmark} & \textbf{AdamW} & \textbf{Muon} & \textbf{MONA} \\
\midrule
MMLU-FewShot~\cite{hendrycks2020measuring} & 0.6218$\pm$0.0098 & 0.6281$\pm$0.0098 & \textbf{0.6373}$\pm$0.0097 \\
MMLU-Pro-FewShot~\cite{wang2024mmlu} & 0.3150$\pm$0.0169 & 0.3250$\pm$0.0170 & \textbf{0.3375}$\pm$0.0172 \\
CMMLU-FewShot~\cite{li2306cmmlu} & 0.6464$\pm$0.0090 & 0.6723$\pm$0.0088 & \textbf{0.6756}$\pm$0.0088 \\
CEVAL-FewShot~\cite{huang2023c} & 0.6274$\pm$0.0263 & 0.6586$\pm$0.0255 & \textbf{0.6717}$\pm$0.0251 \\
BBH-FewShot~\cite{suzgun2023challenging} & 0.3966$\pm$0.0125 & 0.3962$\pm$0.0126 & \textbf{0.4049}$\pm$0.0126 \\
Math (CoT)~\cite{hendrycks2021measuring} & 0.1648$\pm$0.0103 & \textbf{0.1684}$\pm$0.0104 & 0.1594$\pm$0.0101 \\
GPQA-FewShot~\cite{rein2023gpqa} & \textbf{0.2743}$\pm$0.0365 & 0.2518$\pm$0.0355 & 0.2555$\pm$0.0356 \\
DROP (CoT)~\cite{dua2019drop} & 0.4560$\pm$0.0100 & 0.4757$\pm$0.0100 & \textbf{0.4868}$\pm$0.0100 \\
GSM8K (CoT)~\cite{cobbe2021training} & 0.4426$\pm$0.0270 & 0.4550$\pm$0.0271 & \textbf{0.4734}$\pm$0.0272 \\
\midrule
\textbf{Average} & 0.4382 & 0.4478 & \textbf{0.4557} \\
\bottomrule
\end{tabular}
\end{table*}

\begin{table*}[t]
\centering
\small
\caption{Code generation and mathematical reasoning evaluation results for MOE-68B-A3B at 700B tokens. Scores are reported as mean $\pm$ std. Knowledge-Specific is the average of hellaswag~\cite{zellers2019hellaswag}, commonsenseqa~\cite{talmor-etal-2019-commonsenseqa}, openbookqa~\cite{OpenBookQA2018}, piqa~\cite{Bisk2020}, siqa~\cite{sap2019socialIQa}, and winogrande~\cite{sakaguchi2019winogrande}.}
\label{tab:code-math}
\begin{tabular}{lccc}
\toprule
\textbf{Benchmark} & \textbf{AdamW} & \textbf{Muon} & \textbf{MONA} \\
\midrule
HumanEval+~\cite{liu2024evaluating} & \textbf{0.2927}$\pm$0.0699 & 0.2866$\pm$0.0694 & 0.2805$\pm$0.0690 \\
Multiple~\cite{cassano2208multipl} & 0.2519$\pm$0.0326 & 0.2239$\pm$0.0313 & \textbf{0.2848}$\pm$0.0343 \\
MBPP+~\cite{liu2024evaluating} 3-shot & \textbf{0.5265}$\pm$0.0504 & 0.5053$\pm$0.0505 & 0.5265$\pm$0.0504 \\
BigCodeBench~\cite{allal2022framework} & 0.3123$\pm$0.0269 & 0.3061$\pm$0.0268 & \textbf{0.3377}$\pm$0.0275 \\
LiveCodeBench~\cite{jain2025livecodebench} & 0.0358$\pm$0.0219 & 0.0430$\pm$0.0238 & \textbf{0.0502}$\pm$0.0257 \\
LeetCode~\cite{xia2025leetcodedataset} & 0.0540$\pm$0.0209 & \textbf{0.0847}$\pm$0.0249 & 0.0709$\pm$0.0232 \\
Bigmath Mathematics~\cite{albalak2025big} & 0.4627$\pm$0.0173 & 0.4493$\pm$0.0182 & \textbf{0.4733}$\pm$0.0173 \\
ProofWriter~\cite{tafjord2021proofwriter} & 0.4995$\pm$0.0143 & \textbf{0.5312}$\pm$0.0225 & 0.5169$\pm$0.0142 \\
Knowledge-Specific & 0.6575$\pm$0.0142 & 0.7021$\pm$0.0228 & \textbf{0.7025}$\pm$0.0139 \\
CRUXEval~\cite{gu2024cruxeval} & \textbf{0.3563}$\pm$0.0235 & 0.3362$\pm$0.0232 & 0.3500$\pm$0.0234 \\
FullStackBench~\cite{cheng2024fullstack} & 0.2567$\pm$0.0145 & \textbf{0.3044}$\pm$0.0153 & 0.2975$\pm$0.0153 \\
HumanEval+ 3-shot & \textbf{0.3110}$\pm$0.0711 & 0.3049$\pm$0.0707 & 0.3049$\pm$0.0707 \\
\midrule
\textbf{Average} & 0.3353 & 0.3397 & \textbf{0.3495} \\
\bottomrule
\end{tabular}
\end{table*}

\subsection{Pretraining}

We pretrain three MoE~\cite{jacobs1991adaptive} language models of increasing scale based on the LongCat architecture~\cite{team2025longcat}, specifically ScMoE~\cite{cai2024shortcut} with MLA~\cite{liu2024deepseekmla, vaswani2017attention}, comparing MONA against Muon. All models employ MLA and train on sequences of length 8192. We monitor validation loss on four held-out domains covering code, mathematical reasoning, general English text, and Chinese academic text to assess convergence behavior across diverse capabilities. For all the experimental curves shown in the figures, except for the optimizer selection, we kept all hyperparameters, including learning rate, learning rate scheduling, and batch size, consistent throughout the experiments.

\begin{figure}[h!]
\centering
\includegraphics[width=0.9\linewidth]{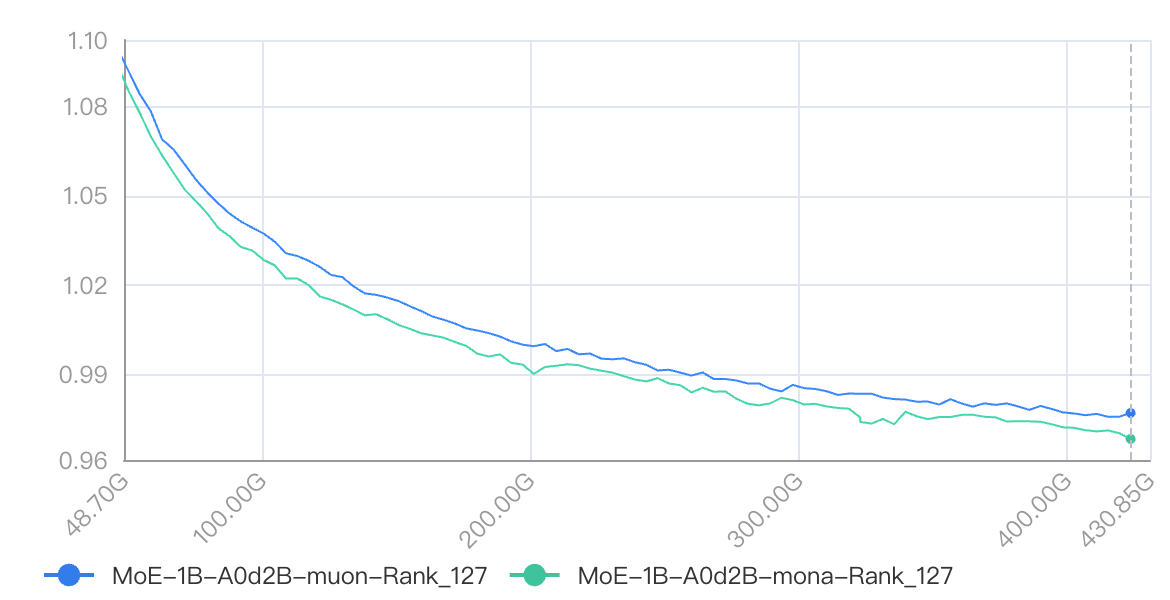}
\caption{Validation loss on Code-Valid for MOE-1B-A0d2B.}
\label{fig:200m-code}
\end{figure}

\begin{figure}[h!]
\centering
\includegraphics[width=0.9\linewidth]{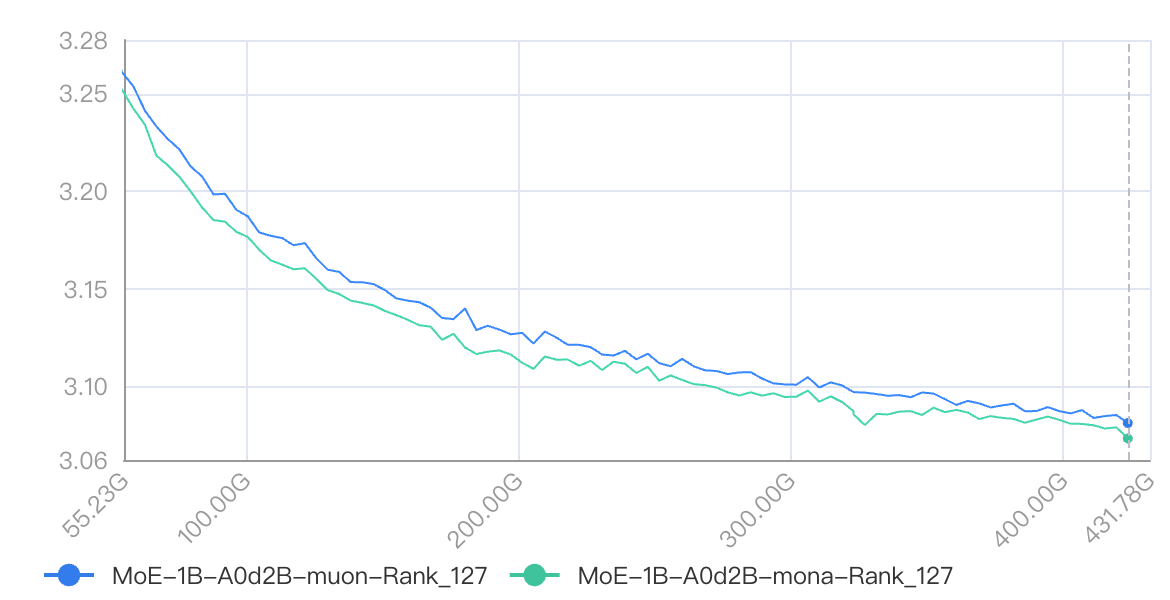}
\caption{Validation loss on General-English-Text for MOE-1B-A0d2B.}
\label{fig:200m-en_book}
\end{figure}

\begin{figure}[h!]
\centering
\includegraphics[width=0.9\linewidth]{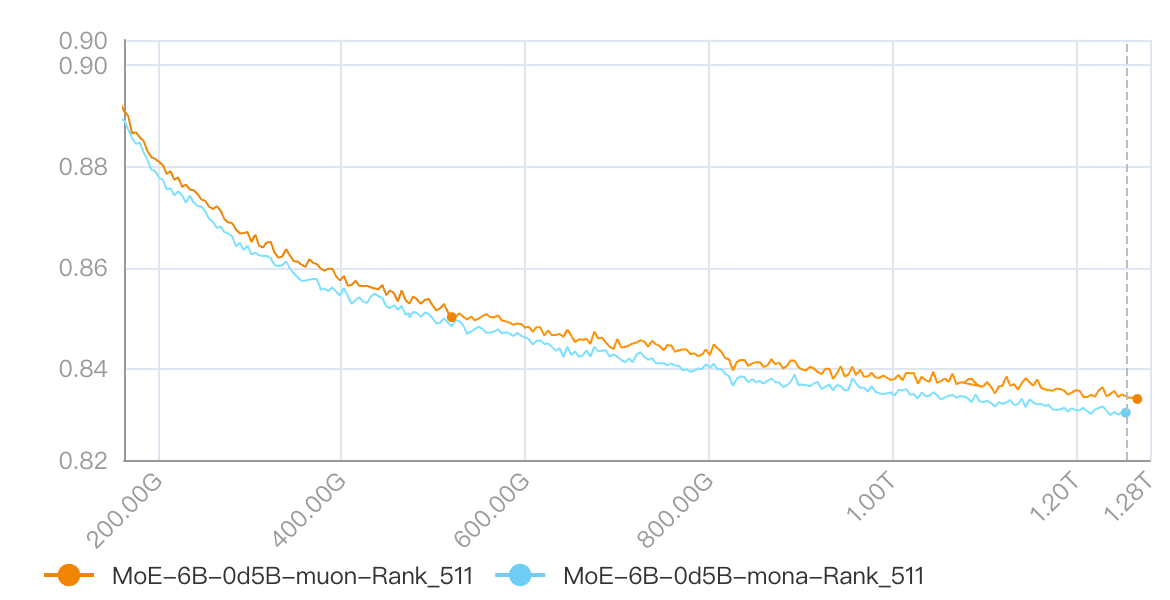}
\caption{Validation loss on Code-Valid for MOE-6B-A0d5B.}
\label{fig:500m-code}
\end{figure}

\begin{figure}[h!]
\centering
\includegraphics[width=0.9\linewidth]{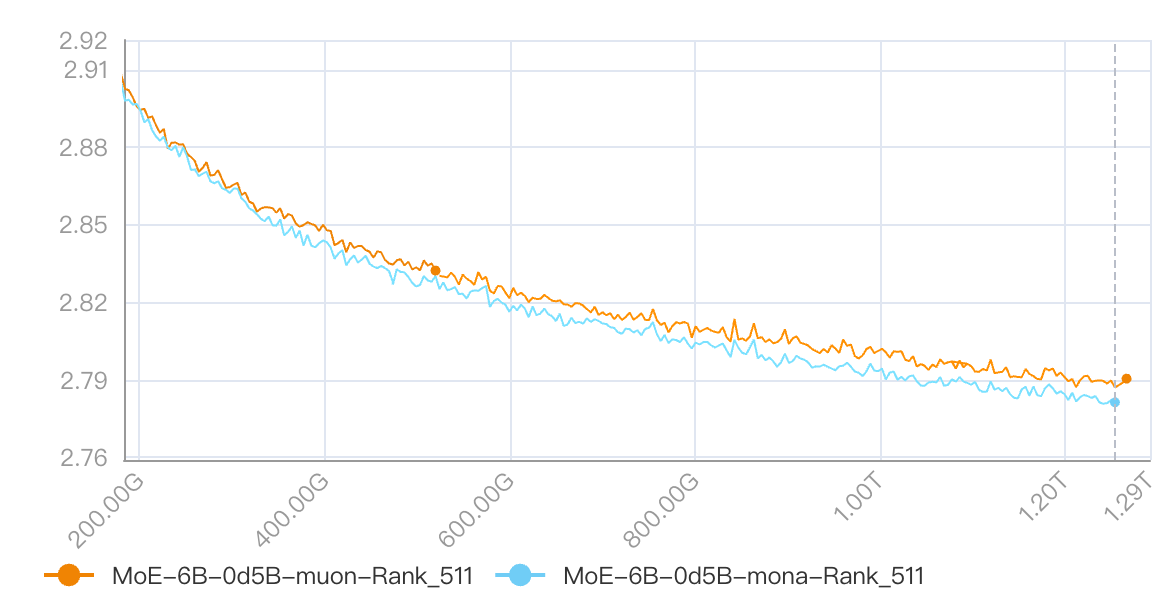}
\caption{Validation loss on General-English-Text for MOE-6B-A0d5B.}
\label{fig:500m-en_book}
\end{figure}

\textbf{MOE-1B-A0d2B.} The smallest model uses 10 transformer layers with 768 hidden dimensions, 16 attention heads, 128 experts with 256 FFN hidden size each, and top-8 routing. We train for approximately 400B tokens. As shown in Figure~\ref{fig:200m-code} and Figure~\ref{fig:200m-en_book}, MONA consistently achieves lower validation loss than Muon across both validation domains, with additional results on mathematical reasoning and Chinese academic text provided in Appendix~\ref{sec:additional-curves}. 

\begin{table*}[h]
\centering
\small
\caption{BigCode evaluation results for MOE-68B-A3B. Scores are reported as mean $\pm$ std. ckpt\_$-1$ is the checkpoint before SFT.}
\label{tab:bigcode}
\begin{tabular}{lccc}
\toprule
\textbf{Task} & \textbf{ckpt\_$-1$} & \textbf{Muon} & \textbf{MONA} \\
\midrule
\textsc{Multiple} & 0.6082$\pm$0.0366 & 0.7836$\pm$0.0302 & \textbf{0.7851}$\pm$0.0298 \\
\textsc{MBPP+} & 0.5423$\pm$0.0503 & 0.7778$\pm$0.0420 & \textbf{0.7857}$\pm$0.0414 \\
\textsc{HumanEval+} & 0.6829$\pm$0.0714 & 0.8720$\pm$0.0513 & \textbf{0.8841}$\pm$0.0491 \\
\textsc{TorchDataEval}~\cite{zan-etal-2022-language} & 0.7740$\pm$0.0328 & \textbf{0.8109}$\pm$0.0308 & 0.8077$\pm$0.0309 \\
\textsc{DS1000}~\cite{lai2023ds} & 0.2860$\pm$0.0332 & 0.3395$\pm$0.0350 & \textbf{0.3703}$\pm$0.0359 \\
\textsc{BigCodeBench} & 0.0658$\pm$0.0144 & 0.5526$\pm$0.0289 & \textbf{0.5553}$\pm$0.0289 \\
\textsc{HumanEval+} 3-shot & 0.0061$\pm$0.0120 & 0.8598$\pm$0.0533 & \textbf{0.8659}$\pm$0.0523 \\
\bottomrule
\end{tabular}
\end{table*}

\textbf{MOE-6B-A0d5B.} The medium-scale model also uses 10 transformer layers but increases to 1536 hidden dimensions, 128 experts of 1024 FFN hidden size each, and top-6 routing, training for approximately 1.2T tokens. Figure~\ref{fig:500m-code} and Figure~\ref{fig:500m-en_book} show that MONA maintains its advantage at this scale, again outperforming Muon across both validation domains (additional results are provided in Appendix~\ref{sec:additional-curves}). The more volatile curves compared to the MoE-1B-A0d2B model indicate a more complex optimization landscape.

\textbf{MOE-68B-A3B.} The largest model scales to 14 transformer layers with 3072 hidden dimensions, 32 attention heads, 256 experts with 1024 FFN hidden size each, and top-12 routing~\cite{liu2026scalingembeddingsoutperformsscaling}. We train for approximately 700B tokens and evaluate the intermediate checkpoint on a comprehensive suite of benchmarks covering general capability, mathematical reasoning, and code generation, comparing MONA against both Muon and AdamW.

Table~\ref{tab:general-capability} reports the results on general capability benchmarks. MONA achieves the highest average score (0.4557) across all three optimizers, outperforming Muon (0.4478) and AdamW (0.4382). Notably, MONA shows consistent improvements on MMLU-FewShot, MMLU-Pro, CMMLU-FewShot, CEVAL-FewShot, BBH-FewShot, DROP, and GSM8K, demonstrating that curvature-aware acceleration helps the model develop stronger general reasoning and mathematical capabilities.

Table~\ref{tab:code-math} reports the results on code generation and specialized mathematics benchmarks. MONA again achieves the highest average, compared to Muon and AdamW. MONA delivers particularly strong gains on \textsc{Multiple}, \textsc{BigCodeBench}, and \textsc{LiveCodeBench}, indicating that the acceleration term's exploration of flatter minima enables the model to learn more transferable code representations.

\subsection{Supervised Fine-Tuning and Evaluation}

\begin{figure}[h!]
\centering
\includegraphics[width=\linewidth]{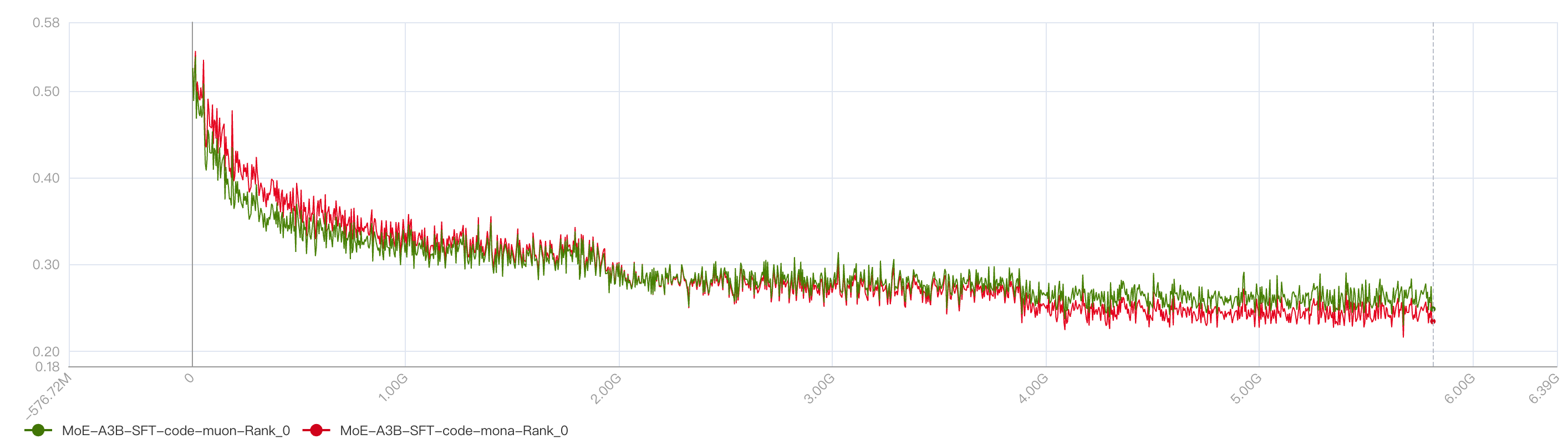}
\caption{SFT training loss on code data for the MOE-68B-A3B model. MONA-pretrained checkpoint (red) achieves lower loss than Muon (green) throughout training, with a larger gap emerging in later epochs.}
\label{fig:3B-SFT}
\end{figure}

To assess the practical utility of MONA-optimized models beyond pretraining, we conduct supervised fine-tuning (SFT) on the MOE-68B-A3B model using high-quality code data. The SFT stage employs Adam with a peak learning rate of cosine decay, training for 3 epochs on approximately 2B tokens with a maximum sequence length of 32k. We compare the MONA-pretrained and Muon-pretrained checkpoints under identical SFT settings.

Figure~\ref{fig:3B-SFT} shows the SFT training loss curves. The MONA-pretrained model consistently achieves lower training loss than the Muon-pretrained baseline throughout all three epochs, with the gap widening in the later stages of training. This suggests that the curvature-aware acceleration in MONA not only improves pretraining convergence but also produces initializations that are better suited for downstream adaptation.

Table~\ref{tab:bigcode} reports the results on the BigCode evaluation suite~\cite{zhuo2025bigcodebench}. The ckpt\_$-1$ column shows that the pretrained base model performs poorly on most code generation tasks, confirming that the subsequent SFT stage is essential for unlocking programming capabilities. After SFT, both Muon and MONA pretrained models achieve large improvements across all benchmarks. Comparing the two, MONA achieves higher scores than Muon on 6 out of 7 tasks. Notably, MONA delivers substantial gains on \textsc{MBPP+}, \textsc{HumanEval+}, and \textsc{DS1000}, etc., indicating that the curvature-aware acceleration during pretraining enables the model to learn more robust code representations that transfer better to downstream programming tasks.

\subsection{Efficient Deployment with Reduced Overhead}
\label{sec:memory-overhead}
MONA introduces two additional state buffers compared to Muon: the previous gradient $G_{k-1}$ and the acceleration buffer $A_k$, both stored in full precision during training. While this overhead is manageable on large-scale training clusters, it can be prohibitive for researchers and practitioners with limited GPU memory. To address this, we explore two complementary strategies for reducing the memory footprint of the acceleration term.

The first strategy is low-precision quantization. We implement \textbf{MONA-Lite}, a variant that stores $G_{k-1}$ and $A_k$ in bfloat16 (BF16) rather than float32 (FP32). This reduces the memory overhead of these auxiliary states by approximately 50\% without requiring any changes to the update equations or the orthogonalization pipeline. 

The second strategy is streaming gradient computation, an engineering optimization that eliminates the $G_{k-1}$ buffer entirely. Instead of storing the previous gradient separately, we compute the gradient difference in-place. After backpropagation produces $G_k$, we immediately compute $G_k - G_{\text{slot}}$ against the gradient stored from the previous step, update $A_k$ with this difference, and then overwrite the slot with $G_k$. This streaming approach removes the need to maintain a dedicated $G_{k-1}$ buffer, leaving only $A_k$ as auxiliary state.

When combined, BF16 quantization and streaming computation reduce the total extra memory overhead by about 75\% compared to standard FP32 MONA. Streaming cuts it by 50\%, and quantization cuts it by another 50\%. This makes the accelerated optimizer practical for resource-constrained settings without sacrificing its curvature-aware benefits.

We evaluate both strategies by pretraining the MOE-1B-A0d2B model under identical hyperparameters. Figure~\ref{fig:mona-lite-code} builds on the code validation plot in Figure~\ref{fig:200m-code}, adding the MONA-Lite curve (yellow) alongside Muon (blue) and FP32 MONA (green). Similarly, Figure~\ref{fig:mona-lite-en} extends Figure~\ref{fig:200m-en_book} to the general English text domain. In both cases, MONA-Lite closely tracks the FP32 MONA curve throughout training, while both optimizers maintain a clear advantage over Muon. This demonstrates that the acceleration term can be safely compressed to BF16 and that streaming computation does not affect training quality across diverse evaluation domains.

\begin{figure}[h!]
\centering
\includegraphics[width=0.8\linewidth]{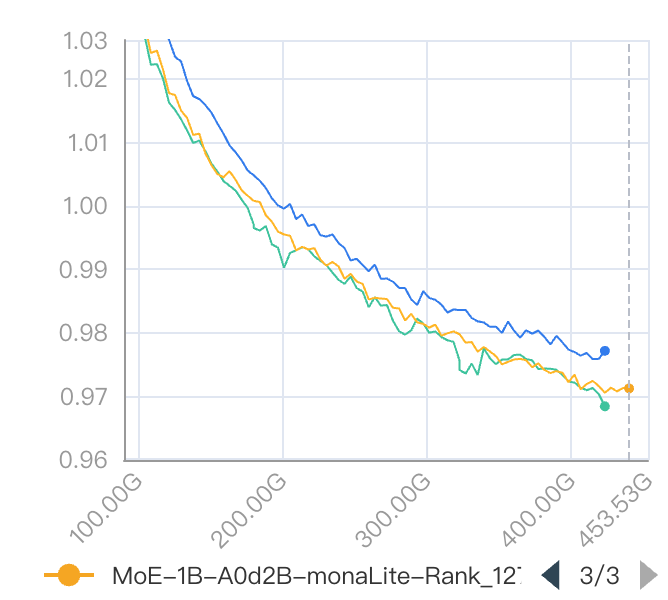}
\caption{Validation loss on Code\_Valid for the MOE-1B-A0d2B model, extending Figure~\ref{fig:200m-code} with the MONA-Lite curve (yellow). MONA-Lite closely tracks FP32 MONA while both maintain a clear advantage over Muon.}
\label{fig:mona-lite-code}
\end{figure}

\begin{figure}[h!]
\centering
\includegraphics[width=0.8\linewidth]{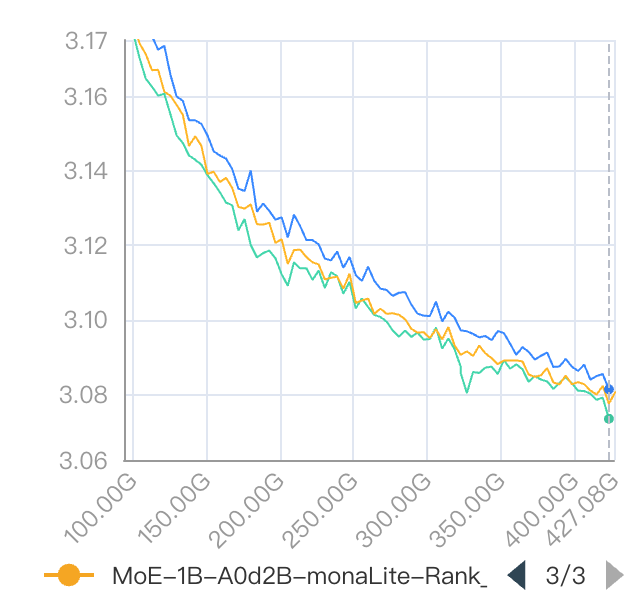}
\caption{Validation loss on General-English-Text for the MOE-1B-A0d2B model, extending Figure~\ref{fig:200m-en_book} with the MONA-Lite curve (yellow). As in the code domain, MONA-Lite remains indistinguishable from FP32 MONA while outperforming Muon.}
\label{fig:mona-lite-en}
\end{figure}

We also measured the training speed overhead. Appendix~\ref{sec:time-overhead} shows MONA is about 1\% slower inside the optimizer step, but this difference disappears at the iteration level, so overall training time is essentially the same as Muon.

\section{Conclusion}

We propose MONA, an improved variant of the Muon optimizer, which integrates curvature-aware acceleration into its matrix orthogonalization framework. By augmenting gradients with an exponential moving average of gradient differences before orthogonalization, MONA endows Muon with the ability to escape sharp minima while preserving all of Muon's geometric benefits. Our theoretical analysis proves convergence under standard assumptions and shows how the acceleration term avoids sharp minima.

Empirically, MONA achieves lower validation loss than both Muon and AdamW across three scales of MoE models (1B to 68B parameters) on code, mathematical reasoning, and general text. At the 68B scale, MONA delivers superior general capability and code generation scores, and its pretrained models achieve higher BigCode evaluation results after code-specific SFT. To reduce the overhead of the acceleration term, we further introduce MONA-Lite, which combines BF16 quantization with streaming gradient computation to cut the extra memory by approximately 75\% without sacrificing training quality.

\section{Limitations}

\textbf{Hyperparameter tuning.} MONA introduces two additional hyperparameters $(\beta_a, \alpha)$ compared to Muon, which adds a tuning cost. However, we observe a consistent relationship between them across all experiments: $\alpha = -1 / (2(1-\beta_a))$. In practice, we set $\beta_a = 0.99$ for the MOE-1B-A0d2B model, $\beta_a = 0.98$ for the MOE-6B-A0d5B model, and $\beta_a = 0.975$ for the MOE-68B-A3B model. This shared rule largely alleviates the tuning overhead.

\textbf{Memory overhead.} Even with BF16 quantization and streaming gradient computation (MONA-Lite), the acceleration buffer $A_k$ still introduces approximately half a gradient's worth of extra memory. Users should balance this residual overhead against the observed training gains when deploying MONA in memory-constrained environments.

\nocite{sutskever2013importance}
\nocite{micikevicius2017mixed}
\bibliography{mona_references}

\begin{thebibliography}{59}
\providecommand{\natexlab}[1]{#1}

\bibitem[{Ahn et~al.(2025)Ahn, Xu, Abreu, Fan, Magakyan, Sharma, Zhan, and Langford}]{ahn2025dion}
Kwangjun Ahn, Byron Xu, Natalie Abreu, Ying Fan, Gagik Magakyan, Pratyusha Sharma, Zheng Zhan, and John Langford. 2025.
\newblock Dion: Distributed orthonormalized updates.
\newblock \emph{arXiv preprint arXiv:2504.05295}.

\bibitem[{Albalak et~al.(2025)Albalak, Phung, Lile, Rafailov, Gandhi, Castricato, Singh, Blagden, Xiang, Mahan et~al.}]{albalak2025big}
Alon Albalak, Duy Phung, Nathan Lile, Rafael Rafailov, Kanishk Gandhi, Louis Castricato, Anikait Singh, Chase Blagden, Violet Xiang, Dakota Mahan, and 1 others. 2025.
\newblock Big-math: A large-scale, high-quality math dataset for reinforcement learning in language models.
\newblock \emph{arXiv preprint arXiv:2502.17387}.

\bibitem[{Allal et~al.(2022)Allal, Muennighoff, Umapathi, Lipkin, and Von~Werra}]{allal2022framework}
Loubna~Ben Allal, Niklas Muennighoff, Logesh~Kumar Umapathi, Ben Lipkin, and Leandro Von~Werra. 2022.
\newblock A framework for the evaluation of code generation models.

\bibitem[{Bisk et~al.(2020)Bisk, Zellers, Bras, Gao, and Choi}]{Bisk2020}
Yonatan Bisk, Rowan Zellers, Ronan~Le Bras, Jianfeng Gao, and Yejin Choi. 2020.
\newblock Piqa: Reasoning about physical commonsense in natural language.
\newblock In \emph{Thirty-Fourth AAAI Conference on Artificial Intelligence}.

\bibitem[{Brown et~al.(2020)Brown, Mann, Ryder, Subbiah, Kaplan, Dhariwal, Neelakantan, Shyam, Sastry, Askell et~al.}]{brown2020language}
Tom Brown, Benjamin Mann, Nick Ryder, Melanie Subbiah, Jared~D Kaplan, Prafulla Dhariwal, Arvind Neelakantan, Pranav Shyam, Girish Sastry, Amanda Askell, and 1 others. 2020.
\newblock Language models are few-shot learners.
\newblock \emph{Advances in neural information processing systems}, 33:1877--1901.

\bibitem[{Cai et~al.(2024)Cai, Jiang, Qin, Cui, Kim, and Huang}]{cai2024shortcut}
Weilin Cai, Juyong Jiang, Le~Qin, Junwei Cui, Sunghun Kim, and Jiayi Huang. 2024.
\newblock Shortcut-connected expert parallelism for accelerating mixture-of-experts.
\newblock \emph{arXiv preprint arXiv:2404.05019}.

\bibitem[{Cassano et~al.()Cassano, Gouwar, Nguyen, Nguyen, Phipps-Costin, Pinckney, Yee, Zi, Anderson, Feldman et~al.}]{cassano2208multipl}
Federico Cassano, John Gouwar, Daniel Nguyen, Sydney Nguyen, Luna Phipps-Costin, Donald Pinckney, Ming-Ho Yee, Yangtian Zi, Carolyn~Jane Anderson, Molly~Q Feldman, and 1 others.
\newblock Multipl-e: A scalable and extensible approach to benchmarking neural code generation, 2022.
\newblock \emph{URL https://arxiv. org/abs/2208.08227}.

\bibitem[{Chen et~al.(2023)Chen, Liang, Huang, Real, Wang, Pham, Dong, Luong, Hsieh, Lu et~al.}]{chen2023symbolic}
Xiangning Chen, Chen Liang, Da~Huang, Esteban Real, Kaiyuan Wang, Hieu Pham, Xuanyi Dong, Thang Luong, Cho-Jui Hsieh, Yifeng Lu, and 1 others. 2023.
\newblock Symbolic discovery of optimization algorithms.
\newblock \emph{Advances in neural information processing systems}, 36:49205--49233.

\bibitem[{Cheng et~al.(2024)Cheng, Chen, Chen, Chen, Chen, Chen, Chen, Geng, Li, Li et~al.}]{cheng2024fullstack}
Yao Cheng, Jianfeng Chen, Jie Chen, Li~Chen, Liyu Chen, Wentao Chen, Zhengyu Chen, Shijie Geng, Aoyan Li, Bo~Li, and 1 others. 2024.
\newblock Fullstack bench: Evaluating llms as full stack coders.
\newblock \emph{arXiv preprint arXiv:2412.00535}.

\bibitem[{Cobbe et~al.(2021)Cobbe, Kosaraju, Bavarian, Chen, Jun, Kaiser, Plappert, Tworek, Hilton, Nakano et~al.}]{cobbe2021training}
Karl Cobbe, Vineet Kosaraju, Mohammad Bavarian, Mark Chen, Heewoo Jun, Lukasz Kaiser, Matthias Plappert, Jerry Tworek, Jacob Hilton, Reiichiro Nakano, and 1 others. 2021.
\newblock Training verifiers to solve math word problems.
\newblock \emph{arXiv preprint arXiv:2110.14168}.

\bibitem[{DeepSeek-AI(2026)}]{deepseekai2026deepseekv4}
DeepSeek-AI. 2026.
\newblock Deepseek-v4: Towards highly efficient million-token context intelligence.

\bibitem[{Dua et~al.(2019)Dua, Wang, Dasigi, Stanovsky, Singh, and Gardner}]{dua2019drop}
Dheeru Dua, Yizhong Wang, Pradeep Dasigi, Gabriel Stanovsky, Sameer Singh, and Matt Gardner. 2019.
\newblock Drop: A reading comprehension benchmark requiring discrete reasoning over paragraphs.
\newblock In \emph{Proceedings of the 2019 Conference of the North American Chapter of the Association for Computational Linguistics: Human Language Technologies, Volume 1 (Long and Short Papers)}, pages 2368--2378.

\bibitem[{Gruntkowska et~al.(2025)Gruntkowska, Maziane, Qu, and Richt{\'a}rik}]{gruntkowska2025drop}
Kaja Gruntkowska, Yassine Maziane, Zheng Qu, and Peter Richt{\'a}rik. 2025.
\newblock Drop-muon: Update less, converge faster.
\newblock \emph{arXiv preprint arXiv:2510.02239}.

\bibitem[{Gu et~al.(2024)Gu, Rozi{\`e}re, Leather, Solar-Lezama, Synnaeve, and Wang}]{gu2024cruxeval}
Alex Gu, Baptiste Rozi{\`e}re, Hugh Leather, Armando Solar-Lezama, Gabriel Synnaeve, and Sida~I Wang. 2024.
\newblock Cruxeval: A benchmark for code reasoning, understanding and execution.
\newblock \emph{arXiv preprint arXiv:2401.03065}.

\bibitem[{Gupta et~al.(2018)Gupta, Koren, and Singer}]{gupta2018shampoo}
Vineet Gupta, Tomer Koren, and Yoram Singer. 2018.
\newblock Shampoo: Preconditioned stochastic tensor optimization.
\newblock In \emph{International Conference on Machine Learning}, pages 1842--1850. PMLR.

\bibitem[{He et~al.(2025)He, Han, Zhou, Chen, Liu, Chen, and Wang}]{he2025root}
Wei He, Kai Han, Hang Zhou, Hanting Chen, Zhicheng Liu, Xinghao Chen, and Yunhe Wang. 2025.
\newblock Root: Robust orthogonalized optimizer for neural network training.
\newblock \emph{arXiv preprint arXiv:2511.20626}.

\bibitem[{Hendrycks et~al.(2020)Hendrycks, Burns, Basart, Zou, Mazeika, Song, and Steinhardt}]{hendrycks2020measuring}
Dan Hendrycks, Collin Burns, Steven Basart, Andy Zou, Mantas Mazeika, Dawn Song, and Jacob Steinhardt. 2020.
\newblock Measuring massive multitask language understanding.
\newblock \emph{arXiv preprint arXiv:2009.03300}.

\bibitem[{Hendrycks et~al.(2021)Hendrycks, Burns, Kadavath, Arora, Basart, Tang, Song, and Steinhardt}]{hendrycks2021measuring}
Dan Hendrycks, Collin Burns, Saurav Kadavath, Akul Arora, Steven Basart, Eric Tang, Dawn Song, and Jacob Steinhardt. 2021.
\newblock Measuring mathematical problem solving with the math dataset.
\newblock \emph{arXiv preprint arXiv:2103.03874}.

\bibitem[{Huang et~al.(2023)Huang, Bai, Zhu, Zhang, Zhang, Su, Liu, Lv, Zhang, Fu et~al.}]{huang2023c}
Yuzhen Huang, Yuzhuo Bai, Zhihao Zhu, Junlei Zhang, Jinghan Zhang, Tangjun Su, Junteng Liu, Chuancheng Lv, Yikai Zhang, Yao Fu, and 1 others. 2023.
\newblock C-eval: A multi-level multi-discipline chinese evaluation suite for foundation models.
\newblock \emph{Advances in neural information processing systems}, 36:62991--63010.

\bibitem[{Jacobs et~al.(1991)Jacobs, Jordan, Nowlan, and Hinton}]{jacobs1991adaptive}
Robert~A Jacobs, Michael~I Jordan, Steven~J Nowlan, and Geoffrey~E Hinton. 1991.
\newblock Adaptive mixtures of local experts.
\newblock \emph{Neural computation}, 3(1):79--87.

\bibitem[{Jain et~al.(2025)Jain, Gu, Li, Yan, Zhang, Wang, Solar-Lezama, Sen, and Stoica}]{jain2025livecodebench}
Naman Jain, Alex Gu, Wen-Ding Li, Fanjia Yan, Tianjun Zhang, Sida Wang, Armando Solar-Lezama, Koushik Sen, and Ion Stoica. 2025.
\newblock Livecodebench: Holistic and contamination free evaluation of large language models for code.
\newblock In \emph{International Conference on Learning Representations}, volume 2025, pages 58791--58831.

\bibitem[{Jordan et~al.(2024)Jordan, Jin, Boza, Jiacheng, Cesista, Newhouse, and Bernstein}]{jordan2024muon}
Keller Jordan, Yuchen Jin, Vlado Boza, You Jiacheng, Franz Cesista, Laker Newhouse, and Jeremy Bernstein. 2024.
\newblock Muon: An optimizer for hidden layers in neural networks, 2024.
\newblock \emph{URL https://kellerjordan. github. io/posts/muon}, 6(3):4.

\bibitem[{Keskar et~al.(2016)Keskar, Mudigere, Nocedal, Smelyanskiy, and Tang}]{keskar2016large}
Nitish~Shirish Keskar, Dheevatsa Mudigere, Jorge Nocedal, Mikhail Smelyanskiy, and Ping Tak~Peter Tang. 2016.
\newblock On large-batch training for deep learning: Generalization gap and sharp minima.
\newblock \emph{arXiv preprint arXiv:1609.04836}.

\bibitem[{Khaled et~al.(2025)Khaled, Ozkara, Yu, Hong, and Park}]{khaled2025muonbp}
Ahmed Khaled, Kaan Ozkara, Tao Yu, Mingyi Hong, and Youngsuk Park. 2025.
\newblock \href {https://doi.org/10.48550/arXiv.2510.16981} {Muonbp: Faster muon via block-periodic orthogonalization}.
\newblock \emph{arXiv preprint arXiv:2510.16981}.

\bibitem[{Kingma and Ba(2014)}]{kingma2014adam}
Diederik~P Kingma and Jimmy Ba. 2014.
\newblock Adam: A method for stochastic optimization.
\newblock \emph{arXiv preprint arXiv:1412.6980}.

\bibitem[{Lai et~al.(2023)Lai, Li, Wang, Zhang, Zhong, Zettlemoyer, Yih, Fried, Wang, and Yu}]{lai2023ds}
Yuhang Lai, Chengxi Li, Yiming Wang, Tianyi Zhang, Ruiqi Zhong, Luke Zettlemoyer, Wen-tau Yih, Daniel Fried, Sida Wang, and Tao Yu. 2023.
\newblock Ds-1000: A natural and reliable benchmark for data science code generation.
\newblock In \emph{International Conference on Machine Learning}, pages 18319--18345. PMLR.

\bibitem[{Li et~al.()Li, Zhang, Koto, Yang, Zhao, Gong, Duan, and Baldwin}]{li2306cmmlu}
Haonan Li, Yixuan Zhang, Fajri Koto, Yifei Yang, Hai Zhao, Yeyun Gong, Nan Duan, and Timothy Baldwin.
\newblock Cmmlu: Measuring massive multitask language understanding in chinese, 2024.
\newblock \emph{URL https://arxiv. org/abs/2306.09212}.

\bibitem[{Li and Hong(2025)}]{li2025note}
Jian Li and Mingyi Hong. 2025.
\newblock A note on the convergence of muon and further.
\newblock \emph{arXiv preprint arXiv:2502.16982}.

\bibitem[{Liu et~al.(2024{\natexlab{a}})Liu, Feng, Wang, Wang, Liu, Zhao, Dengr, Ruan, Dai, Guo et~al.}]{liu2024deepseekmla}
Aixin Liu, Bei Feng, Bin Wang, Bingxuan Wang, Bo~Liu, Chenggang Zhao, Chengqi Dengr, Chong Ruan, Damai Dai, Daya Guo, and 1 others. 2024{\natexlab{a}}.
\newblock Deepseek-v2: A strong, economical, and efficient mixture-of-experts language model.
\newblock \emph{arXiv preprint arXiv:2405.04434}.

\bibitem[{Liu et~al.(2024{\natexlab{b}})Liu, Feng, Xue, Wang, Wu, Lu, Zhao, Deng, Zhang, Ruan et~al.}]{liu2024deepseek}
Aixin Liu, Bei Feng, Bing Xue, Bingxuan Wang, Bochao Wu, Chengda Lu, Chenggang Zhao, Chengqi Deng, Chenyu Zhang, Chong Ruan, and 1 others. 2024{\natexlab{b}}.
\newblock Deepseek-v3 technical report.
\newblock \emph{arXiv preprint arXiv:2412.19437}.

\bibitem[{Liu et~al.(2026)Liu, Zhang, Wang, Hu, Lyu, Sun, Yang, Wang, Li, Qian, Si, Sun, Li, Pei, Xie, and Cai}]{liu2026scalingembeddingsoutperformsscaling}
Hong Liu, Jiaqi Zhang, Chao Wang, Xing Hu, Linkun Lyu, Jiaqi Sun, Xurui Yang, Bo~Wang, Fengcun Li, Yulei Qian, Lingtong Si, Yerui Sun, Rumei Li, Peng Pei, Yuchen Xie, and Xunliang Cai. 2026.
\newblock \href {https://arxiv.org/abs/2601.21204} {Scaling embeddings outperforms scaling experts in language models}.
\newblock \emph{Preprint}, arXiv:2601.21204.

\bibitem[{Liu et~al.(2024{\natexlab{c}})Liu, Xie, Wang, Wei, Ding, and Zhang}]{liu2024evaluating}
Jiawei Liu, Songrun Xie, Junhao Wang, Yuxiang Wei, Yifeng Ding, and Lingming Zhang. 2024{\natexlab{c}}.
\newblock Evaluating language models for efficient code generation.
\newblock \emph{arXiv preprint arXiv:2408.06450}.

\bibitem[{Liu et~al.(2025)Liu, Su, Yao, Jiang, Lai, Du, Qin, Xu, Lu, Yan et~al.}]{liu2025muon}
Jingyuan Liu, Jianlin Su, Xingcheng Yao, Zhejun Jiang, Guokun Lai, Yulun Du, Yidao Qin, Weixin Xu, Enzhe Lu, Junjie Yan, and 1 others. 2025.
\newblock Muon is scalable for llm training.
\newblock \emph{arXiv preprint arXiv:2502.16982}.

\bibitem[{Loshchilov and Hutter(2017)}]{loshchilov2017decoupled}
Ilya Loshchilov and Frank Hutter. 2017.
\newblock Decoupled weight decay regularization.
\newblock \emph{arXiv preprint arXiv:1711.05101}.

\bibitem[{Martens and Grosse(2015)}]{martens2015optimizing}
James Martens and Roger Grosse. 2015.
\newblock Optimizing neural networks with kronecker-factored approximate curvature.
\newblock In \emph{International conference on machine learning}, pages 2408--2417. PMLR.

\bibitem[{Micikevicius et~al.(2017)Micikevicius, Narang, Alben, Diamos, Elsen, Garcia, Ginsburg, Houston, Kuchaiev, Venkatesh et~al.}]{micikevicius2017mixed}
Paulius Micikevicius, Sharan Narang, Jonah Alben, Gregory Diamos, Erich Elsen, David Garcia, Boris Ginsburg, Michael Houston, Oleksii Kuchaiev, Ganesh Venkatesh, and 1 others. 2017.
\newblock Mixed precision training.
\newblock \emph{arXiv preprint arXiv:1710.03740}.

\bibitem[{Mihaylov et~al.(2018)Mihaylov, Clark, Khot, and Sabharwal}]{OpenBookQA2018}
Todor Mihaylov, Peter Clark, Tushar Khot, and Ashish Sabharwal. 2018.
\newblock Can a suit of armor conduct electricity? a new dataset for open book question answering.
\newblock In \emph{EMNLP}.

\bibitem[{Rein et~al.(2023)Rein, Hou, Stickland, Petty, Pang, Dirani, Michael, and Bowman}]{rein2023gpqa}
David Rein, Betty~Li Hou, Asa~Cooper Stickland, Jackson Petty, Richard~Yuanzhe Pang, Julien Dirani, Julian Michael, and Samuel~R Bowman. 2023.
\newblock Gpqa: A graduate-level google-proof q\&a benchmark.
\newblock \emph{arXiv preprint arXiv:2311.12022}.

\bibitem[{Robbins and Monro(1951)}]{robbins1951stochastic}
Herbert Robbins and Sutton Monro. 1951.
\newblock A stochastic approximation method.
\newblock \emph{The annals of mathematical statistics}, pages 400--407.

\bibitem[{Sakaguchi et~al.(2019)Sakaguchi, Bras, Bhagavatula, and Choi}]{sakaguchi2019winogrande}
Keisuke Sakaguchi, Ronan~Le Bras, Chandra Bhagavatula, and Yejin Choi. 2019.
\newblock Winogrande: An adversarial winograd schema challenge at scale.
\newblock \emph{arXiv preprint arXiv:1907.10641}.

\bibitem[{Sap et~al.(2019)Sap, Rashkin, Chen, LeBras, and Choi}]{sap2019socialIQa}
Maarten Sap, Hannah Rashkin, Derek Chen, Ronan LeBras, and Yejin Choi. 2019.
\newblock \href {https://www.aclweb.org/anthology/D19-1454} {Social iqa: Commonsense reasoning about social interactions}.
\newblock In \emph{EMNLP}.

\bibitem[{Si et~al.(2025)Si, Zhang, and Shen}]{si2025adamuon}
Chongjie Si, Debing Zhang, and Wei Shen. 2025.
\newblock Adamuon: Adaptive muon optimizer.
\newblock \emph{arXiv preprint arXiv:2507.11005}.

\bibitem[{Sutskever et~al.(2013)Sutskever, Martens, Dahl, and Hinton}]{sutskever2013importance}
Ilya Sutskever, James Martens, George Dahl, and Geoffrey Hinton. 2013.
\newblock On the importance of initialization and momentum in deep learning.
\newblock In \emph{International conference on machine learning}, pages 1139--1147. pmlr.

\bibitem[{Suzgun et~al.(2023)Suzgun, Scales, Sch{\"a}rli, Gehrmann, Tay, Chung, Chowdhery, Le, Chi, Zhou et~al.}]{suzgun2023challenging}
Mirac Suzgun, Nathan Scales, Nathanael Sch{\"a}rli, Sebastian Gehrmann, Yi~Tay, Hyung~Won Chung, Aakanksha Chowdhery, Quoc Le, Ed~Chi, Denny Zhou, and 1 others. 2023.
\newblock Challenging big-bench tasks and whether chain-of-thought can solve them.
\newblock In \emph{Findings of the Association for Computational Linguistics: ACL 2023}, pages 13003--13051.

\bibitem[{Tafjord et~al.(2021)Tafjord, Dalvi, and Clark}]{tafjord2021proofwriter}
Oyvind Tafjord, Bhavana Dalvi, and Peter Clark. 2021.
\newblock Proofwriter: Generating implications, proofs, and abductive statements over natural language.
\newblock In \emph{Findings of the Association for Computational Linguistics: ACL-IJCNLP 2021}, pages 3621--3634.

\bibitem[{Talmor et~al.(2019)Talmor, Herzig, Lourie, and Berant}]{talmor-etal-2019-commonsenseqa}
Alon Talmor, Jonathan Herzig, Nicholas Lourie, and Jonathan Berant. 2019.
\newblock \href {https://doi.org/10.18653/v1/N19-1421} {{C}ommonsense{QA}: A question answering challenge targeting commonsense knowledge}.
\newblock In \emph{Proceedings of the 2019 Conference of the North {A}merican Chapter of the Association for Computational Linguistics: Human Language Technologies, Volume 1 (Long and Short Papers)}, pages 4149--4158, Minneapolis, Minnesota. Association for Computational Linguistics.

\bibitem[{Team et~al.(2025{\natexlab{a}})Team, Bai, Bao, Chen, Chen, Chen, Chen, Chen, Chen, Chen, Chen, Cui, Ding, Dong, Du, Du, Du, Du, Fan, Feng, Fu, Gao, Gao, Gao, Gao, Gu, Guan, Guo, Guo, Hu, Hao, He, He, He, Hong, Hu, Hu, Huang, Huang, Huang, Jiang, Jiang, Jin, Kang, Lai, Li, Li, Li, Li, Li, Li, Li, Li, Li, Lin, Lin, Lin, Liu, Liu, Liu, Liu, Liu, Liu, Liu, Liu, Liu, Liu, Liu, Liu, Liu, Liu, Liu, Lu, Lu, Ma, Ma, Ma, Mao, Mei, Men, Miao, Pan, Peng, Qin, Qu, Shang, Shi, Shi, Song, Su, Su, Sun, Sung, Tang, Tao, Teng, Wang, Wang, Wang, Wang, Wang, Wang, Wang, Wang, Wang, Wang, Wang, Wang, Wang, Wang, Wang, Wang, Wang, Wei, Wei, Wu, Wu, Wu, Xiao, Xie, Xiong, Xu, Xu, Xu, Xu, Xu, Xu, Xu, Xu, Xu, Xu, Yan, Yan, Yang, Yang, Yang, Yang, Yang, Yao, Yao, Ye, Ye, Yin, Yu, Yuan, Yuan, Yuan, Zhan, Zhang, Zhang, Zhang, Zhang, Zhang, Zhang, Zhang, Zhang, Zhang, Zhang, Zhang, Zhao, Zhao, Zheng, Zheng, Zhou, Zhou, Zhou, Zhu, Zhuang, and Zu}]{kimiteam2025kimik2openagentic}
Kimi Team, Yifan Bai, Yiping Bao, Guanduo Chen, Jiahao Chen, Ningxin Chen, Ruijue Chen, Yanru Chen, Yuankun Chen, Yutian Chen, Zhuofu Chen, Jialei Cui, Hao Ding, Mengnan Dong, Angang Du, Chenzhuang Du, Dikang Du, Yulun Du, Yu~Fan, and 150 others. 2025{\natexlab{a}}.
\newblock \href {https://arxiv.org/abs/2507.20534} {Kimi k2: Open agentic intelligence}.
\newblock \emph{Preprint}, arXiv:2507.20534.

\bibitem[{Team et~al.(2025{\natexlab{b}})Team, Li, Lei, Wang, Rong, Wang, Zhang, Gao, Zhang, Sun et~al.}]{team2025longcat}
Meituan~LongCat Team, Bei Li, Bingye Lei, Bo~Wang, Bolin Rong, Chao Wang, Chao Zhang, Chen Gao, Chen Zhang, Cheng Sun, and 1 others. 2025{\natexlab{b}}.
\newblock Longcat-flash technical report.
\newblock \emph{arXiv preprint arXiv:2509.01322}.

\bibitem[{Vaswani et~al.(2017)Vaswani, Shazeer, Parmar, Uszkoreit, Jones, Gomez, Kaiser, and Polosukhin}]{vaswani2017attention}
Ashish Vaswani, Noam Shazeer, Niki Parmar, Jakob Uszkoreit, Llion Jones, Aidan~N Gomez, {\L}ukasz Kaiser, and Illia Polosukhin. 2017.
\newblock Attention is all you need.
\newblock \emph{Advances in neural information processing systems}, 30.

\bibitem[{Wang et~al.(2024)Wang, Ma, Zhang, Ni, Chandra, Guo, Ren, Arulraj, He, Jiang et~al.}]{wang2024mmlu}
Yubo Wang, Xueguang Ma, Ge~Zhang, Yuansheng Ni, Abhranil Chandra, Shiguang Guo, Weiming Ren, Aaran Arulraj, Xuan He, Ziyan Jiang, and 1 others. 2024.
\newblock Mmlu-pro: A more robust and challenging multi-task language understanding benchmark.
\newblock \emph{Advances in Neural Information Processing Systems}, 37:95266--95290.

\bibitem[{Xia et~al.(2025)Xia, Shen, Wang, Liu, Sun, Wu, Hu, and Xu}]{xia2025leetcodedataset}
Yunhui Xia, Wei Shen, Yan Wang, Jason~Klein Liu, Huifeng Sun, Siyue Wu, Jian Hu, and Xiaolong Xu. 2025.
\newblock Leetcodedataset: A temporal dataset for robust evaluation and efficient training of code llms.
\newblock \emph{arXiv preprint arXiv:2504.14655}.

\bibitem[{Xie et~al.(2024)Xie, Zhou, Li, Lin, and Yan}]{xie2024adan}
Xingyu Xie, Pan Zhou, Huan Li, Zhouchen Lin, and Shuicheng Yan. 2024.
\newblock Adan: Adaptive nesterov momentum algorithm for faster optimizing deep models.
\newblock \emph{IEEE Transactions on Pattern Analysis and Machine Intelligence}, 46(12):9508--9520.

\bibitem[{Yang et~al.(2025)Yang, Li, Yang, Zhang, Hui, Zheng, Yu, Gao, Huang, Lv et~al.}]{yang2025qwen3}
An~Yang, Anfeng Li, Baosong Yang, Beichen Zhang, Binyuan Hui, Bo~Zheng, Bowen Yu, Chang Gao, Chengen Huang, Chenxu Lv, and 1 others. 2025.
\newblock Qwen3 technical report.
\newblock \emph{arXiv preprint arXiv:2505.09388}.

\bibitem[{You et~al.(2017)You, Gitman, and Ginsburg}]{you2017large}
Yang You, Igor Gitman, and Boris Ginsburg. 2017.
\newblock Large batch training of convolutional networks.
\newblock \emph{arXiv preprint arXiv:1708.03888}.

\bibitem[{You et~al.(2019)You, Li, Reddi, Hseu, Kumar, Bhojanapalli, Song, Demmel, Keutzer, and Hsieh}]{you2019large}
Yang You, Jing Li, Sashank Reddi, Jonathan Hseu, Sanjiv Kumar, Srinadh Bhojanapalli, Xiaodan Song, James Demmel, Kurt Keutzer, and Cho-Jui Hsieh. 2019.
\newblock Large batch optimization for deep learning: Training bert in 76 minutes.
\newblock \emph{arXiv preprint arXiv:1904.00962}.

\bibitem[{Zan et~al.(2022)Zan, Chen, Lin, Guan, Wang, and Lou}]{zan-etal-2022-language}
Daoguang Zan, Bei Chen, Zeqi Lin, Bei Guan, Yongji Wang, and Jian-Guang Lou. 2022.
\newblock When language model meets private library.
\newblock In \emph{Findings of the Association for Computational Linguistics: EMNLP 2022}, pages 277--288.

\bibitem[{Zellers et~al.(2019)Zellers, Holtzman, Bisk, Farhadi, and Choi}]{zellers2019hellaswag}
Rowan Zellers, Ari Holtzman, Yonatan Bisk, Ali Farhadi, and Yejin Choi. 2019.
\newblock Hellaswag: Can a machine really finish your sentence?
\newblock In \emph{Proceedings of the 57th Annual Meeting of the Association for Computational Linguistics}.

\bibitem[{Zhao et~al.(2026)Zhao, Li, Zhou, Tan, and Jia}]{zhao2025alto}
Tong Zhao, Jiacheng Li, Yuanchang Zhou, Guangming Tan, and Weile Jia. 2026.
\newblock Exploring landscapes for better minima along valleys.
\newblock \emph{Advances in Neural Information Processing Systems}, 38:171496--171547.

\bibitem[{Zhuo et~al.(2025)Zhuo, Vu, Chim, Hu, Yu, Widyasari, Yusuf, Zhan, He, Paul et~al.}]{zhuo2025bigcodebench}
Terry~Yue Zhuo, Minh~Chien Vu, Jenny Chim, Han Hu, Wenhao Yu, Ratnadira Widyasari, Imam Nur~Bani Yusuf, Haolan Zhan, Junda He, Indraneil Paul, and 1 others. 2025.
\newblock Bigcodebench: Benchmarking code generation with diverse function calls and complex instructions.
\newblock In \emph{International Conference on Learning Representations}, volume 2025, pages 66602--66656.

\end{thebibliography}

\appendix

\section{Proofs of Theoretical Results}\label{sec:proofs}

\subsection{Proof of Lemma~\ref{lem:bounded_accel}}

\begin{proof}
From $A_k = \beta_a A_{k-1} + (1-\beta_a)(G_k - G_{k-1})$ and the triangle inequality,
\begin{equation}
\|A_k\|_F \leq \beta_a\|A_{k-1}\|_F + 2(1-\beta_a)G.
\end{equation}
Unrolling with $A_{-1} = 0$,
\begin{equation}
\|A_k\|_F \leq 2G(1-\beta_a^k) \leq 2G.
\end{equation}
For $\tilde{G}_k = G_k + \alpha A_k$,

\begin{multline}
\|\tilde{G}_k\|_F \leq G + |\alpha| \cdot 2G \\
= G(1 + 2|\alpha|). \qedhere
\end{multline}

\end{proof}

\subsection{Proof of Lemma~\ref{lem:momentum_bound}}

\begin{proof}
The update $M_k = \mu M_{k-1} + \tilde{G}_k$ gives
\begin{equation}
\|M_k\|_F \leq \mu\|M_{k-1}\|_F + G(1 + 2|\alpha|).
\end{equation}
Unrolling with $M_0 = 0$,
\begin{equation}
\|M_k\|_F \leq \frac{G(1 + 2|\alpha|)}{1-\mu}. \qedhere
\end{equation}
\end{proof}

\subsection{Proof of Theorem~\ref{thm:nonconvex}}

\begin{proof}
By L-smoothness,
\begin{multline}
f(W_{k+1}) \leq f(W_k) + \inner{\nabla f(W_k)}{W_{k+1}-W_k} \\
+ \frac{L}{2}\|W_{k+1}-W_k\|_F^2.
\end{multline}
Substituting $W_{k+1} - W_k = -\eta\gamma O_k$,
\begin{multline}
f(W_{k+1}) \leq f(W_k) - \eta\gamma\inner{\nabla f(W_k)}{O_k} \\
+ \frac{\eta^2\gamma^2 L}{2}\|O_k\|_F^2.
\end{multline}

By Assumption~\ref{assum:alignment}, $\EX[\inner{\nabla f(W_k)}{O_k} \mid W_k] \geq \rho\|\nabla f(W_k)\|_F^2$. Thus:
\begin{multline}
\EX[f(W_{k+1}) \mid W_k] \leq f(W_k) - \eta\gamma\rho\|\nabla f(W_k)\|_F^2 \\
+ \frac{\eta^2\gamma^2 L}{2}\EX[\|O_k\|_F^2 \mid W_k].
\end{multline}

For $\|O_k\|_F$: since Newton-Schulz outputs approximately orthogonal matrices with singular values close to~1,

\begin{multline}
\|O_k\|_F^2 \leq C_m \quad \text{for some constant } \\
C_m = O(r).
\end{multline}

Rearranging and taking full expectation,

\begin{multline}
\eta\gamma\rho\EX[\|\nabla f(W_k)\|_F^2] \\
\leq \EX[f(W_k) - f(W_{k+1})] + \frac{\eta^2\gamma^2 L C_m}{2}.
\end{multline}


Summing over $k = 0, \ldots, K-1$ and using $f(W_k) \geq f^*$,

\begin{multline}
\frac{1}{K}\sum_{k=0}^{K-1} \EX[\|\nabla f(W_k)\|_F^2] \\
\leq \frac{f(W_0)-f^*}{\eta\rho\gamma K} + \frac{\eta\gamma L C_m}{2\rho},
\end{multline}
which matches the stated bound with $C_4 = \gamma^2 C_m/2$. With $\eta = O(K^{-1/2})$, this gives $O(K^{-1/2})$ rate.

\end{proof}

\subsection{Proof of Proposition~\ref{prop:escape}}

\begin{proof}
Near $W^*$, $\nabla f(W) \approx H^*(W-W^*)$, so

\begin{multline}
\EX[G_k - G_{k-1} \mid W_k, W_{k-1}] \approx H^*(W_k - W_{k-1}) \\
= -\eta\gamma H^* O_{k-1}.
\end{multline}

The acceleration $A_k = \beta_a A_{k-1} + (1-\beta_a)(G_k - G_{k-1})$ yields, taking expectation and unrolling,
\begin{equation}
\EX[A_k] \approx -\eta\gamma H^* \sum_{j=0}^{k}(1-\beta_a)\beta_a^{k-j} O_{j-1}.
\end{equation}

With $H^* = Q\Lambda Q^\top$ and eigenvalues $\lambda_1 \geq \cdots \geq \lambda_d > 0$, in the eigenbasis,
\begin{equation}
\EX[A_k^{(i)}] \approx -\eta\gamma\lambda_i \sum_{j=0}^{k}(1-\beta_a)\beta_a^{k-j} O_{j-1}^{(i)}.
\end{equation}

For large $\lambda_i$ (sharp directions), $|\EX[A_k^{(i)}]|$ is large, pushing the optimizer away. For small $\lambda_i$ (flat directions), it is small, allowing settlement. With $\alpha < 0$, $\tilde{G}_k = G_k + \alpha A_k$ selectively amplifies flat-direction updates. \qedhere
\end{proof}

\section{Additional Validation Curves}\label{sec:additional-curves}

Figures~\ref{fig:200m-gsm8k}--\ref{fig:500m-qikanwang} present supplementary validation loss curves for the mathematical reasoning and Chinese academic text domains.

\begin{figure}[h!]
\centering
\includegraphics[width=0.85\linewidth]{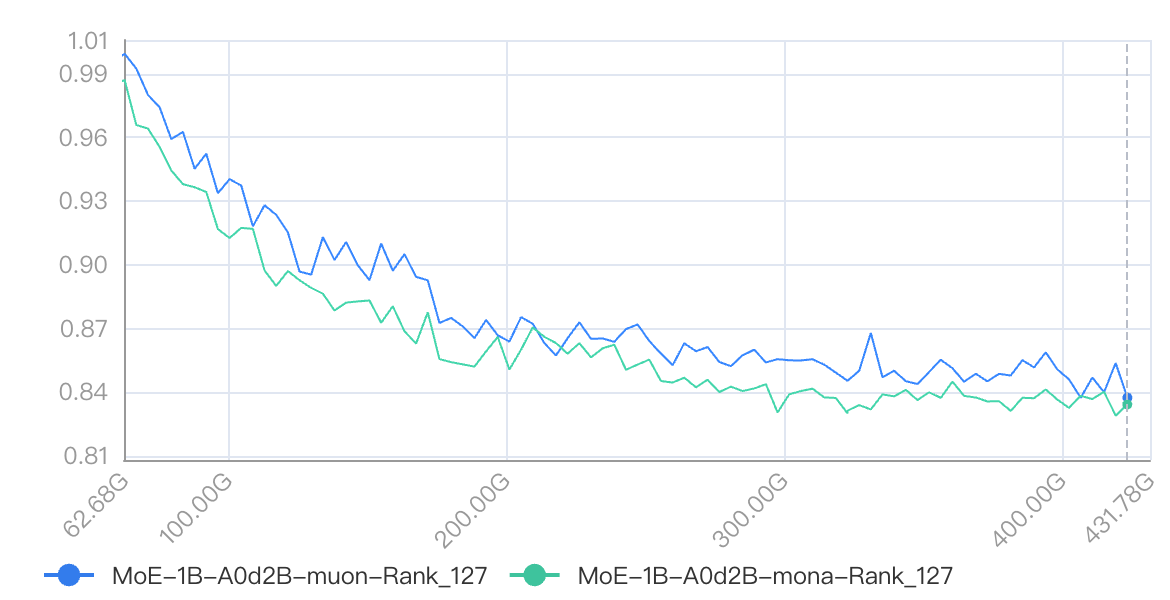}
\caption{Validation loss on Mathematical-Reasoning for MOE-1B-A0d2B.}
\label{fig:200m-gsm8k}
\end{figure}

\begin{figure}[h!]
\centering
\includegraphics[width=0.85\linewidth]{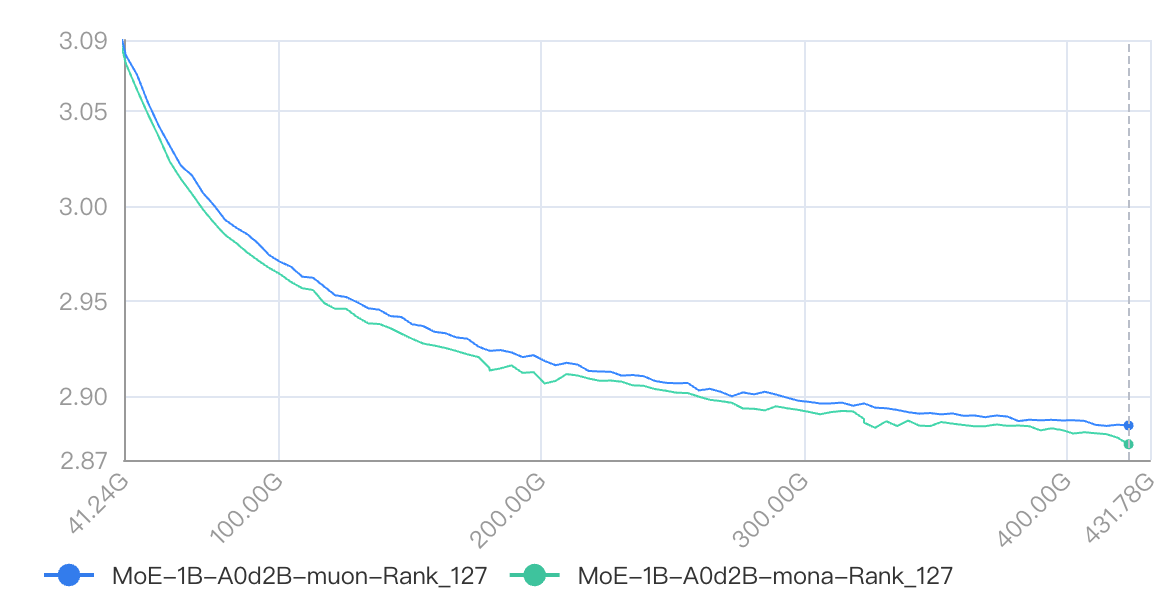}
\caption{Validation loss on Chinese-Academic-Text for MOE-1B-A0d2B.}
\label{fig:200m-qikanwang}
\end{figure}

\begin{figure}[h!]
\centering
\includegraphics[width=0.85\linewidth]{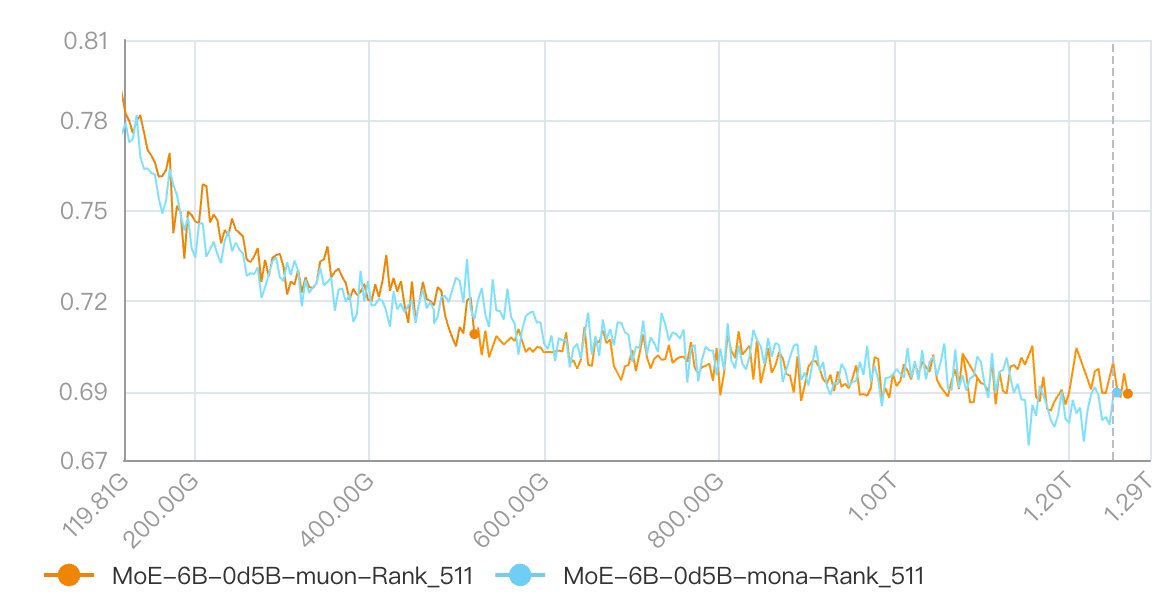}
\caption{Validation loss on Mathematical-Reasoning for MOE-6B-A0d5B.}
\label{fig:500m-gsm8k}
\end{figure}

\begin{figure}[h!]
\centering
\includegraphics[width=0.85\linewidth]{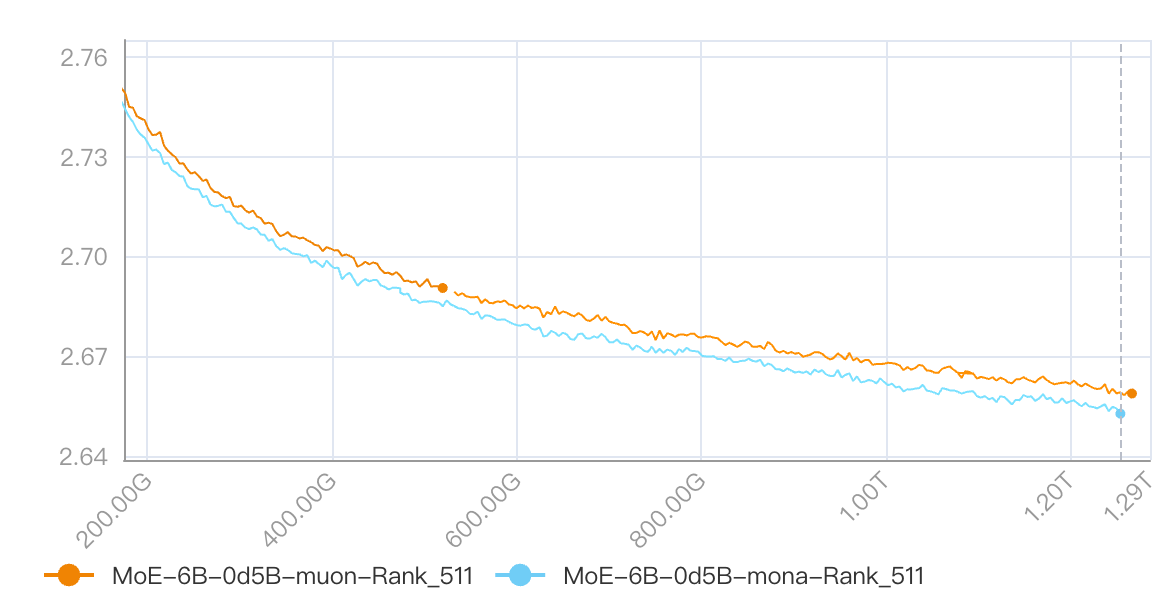}
\caption{Validation loss on Chinese-Academic-Text for MOE-6B-A0d5B.}
\label{fig:500m-qikanwang}
\end{figure}

\section{Computational Overhead Analysis}\label{sec:time-overhead}

The memory overhead of MONA's additional buffers was already discussed in Section~\ref{sec:memory-overhead}. Here, we add measurements of computational time overhead from the MOE-6B-A0d5B pretraining run.

Figure~\ref{fig:0d5b-optimizer-inner-step-time} shows the optimizer inner step time. This is the wall-clock time spent inside the optimizer update at each training step, not counting communication or data loading. MONA adds several simple operations while updating the acceleration item. These add a small amount of time inside the optimizer. By sampling points across the training run, we find that MONA's optimizer inner step time is about \textbf{1\%} higher than Muon's on average.

Figure~\ref{fig:0d5b-iteration-time} reports the end-to-end iteration time, covering the full training step from the forward pass to gradient synchronization and data loading. At this level, the difference between MONA and Muon is too small to notice. The tiny overhead inside the optimizer is completely hidden by the natural variation in communication and data loading between steps. Overall, MONA and Muon run at essentially the same speed in practice.

\begin{figure}[h!]
\centering
\includegraphics[width=0.8\linewidth]{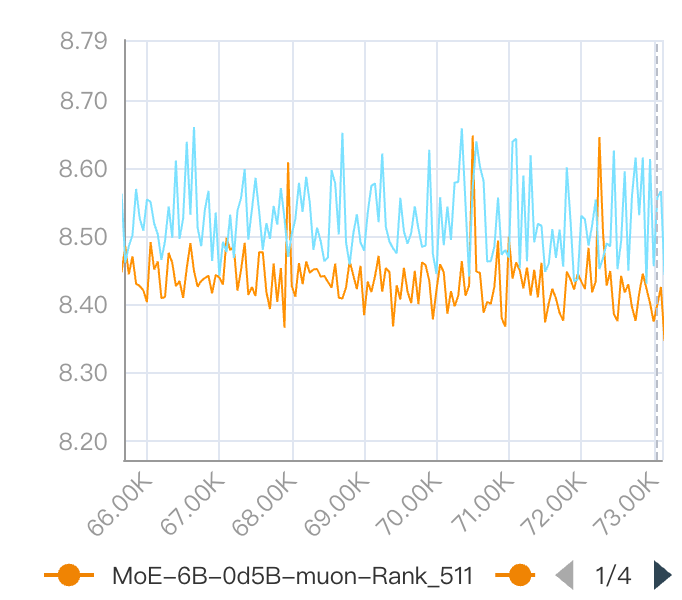}
\caption{Optimizer inner step time for MOE-6B-A0d5B pretraining. MONA runs about 1\% slower than Muon at the optimizer step level.}
\label{fig:0d5b-optimizer-inner-step-time}
\end{figure}

\begin{figure}[h!]
\centering
\includegraphics[width=0.8\linewidth]{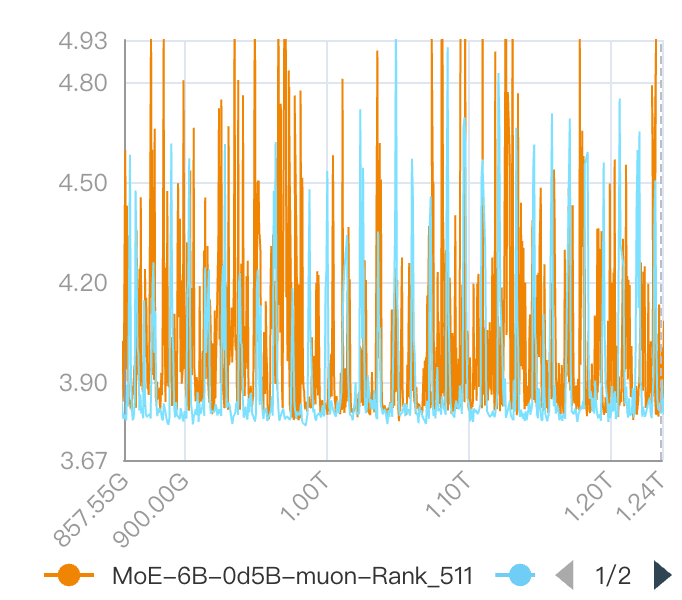}
\caption{End-to-end iteration time for MOE-6B-A0d5B pretraining. MONA and Muon show no practical difference in overall training speed.}
\label{fig:0d5b-iteration-time}
\end{figure}

\section{Comparison with Accelerated AdamW}\label{sec:adamw-acc}

To better understand where the acceleration gains come from, we compare MONA against not only Muon and AdamW but also an AdamW variant equipped with the same acceleration term. We call this variant AdamW-Acc. It is adapted from ALTO's acceleration mechanism, but with one key change for fair comparison in the pretraining setting.

ALTO uses layer-wise learning rate regularization, which assigns different learning rates to different layers. In practice, pretraining typically uses a relatively large fixed learning rate, while post-training applies lower learning rates along with various schedulers. If pretraining already introduces layer-wise learning rate dynamics, it effectively consumes some of the learning rate scheduling benefits that would otherwise belong to the post-training stage. To avoid this, we replace the layer-wise regularization with a default uniform learning rate. This makes AdamW-Acc a clean baseline that isolates the acceleration benefit.

We run all four optimizers, AdamW, AdamW-Acc, Muon, and MONA, on the MOE-1B-A0d2B model under identical settings. Figure~\ref{fig:200m-adamw-muon-2acc-train} shows the training loss curves, and Figure~\ref{fig:200m-adamw-muon-2acc-valid-en_book} shows the validation loss on the General-English-Text domain.

The results follow a consistent ordering. MONA achieves the lowest loss, followed by Muon, then AdamW-Acc, and finally AdamW. This confirms two observations. First, the acceleration term itself brings measurable improvement, since AdamW-Acc outperforms AdamW. Second, Muon already provides a more stable training foundation than AdamW, with smoother loss curves and no spike issues during pretraining. The acceleration term builds on this stable base and pushes performance further, rather than merely recovering from instability.

\begin{figure}[h!]
\centering
\includegraphics[width=0.85\linewidth]{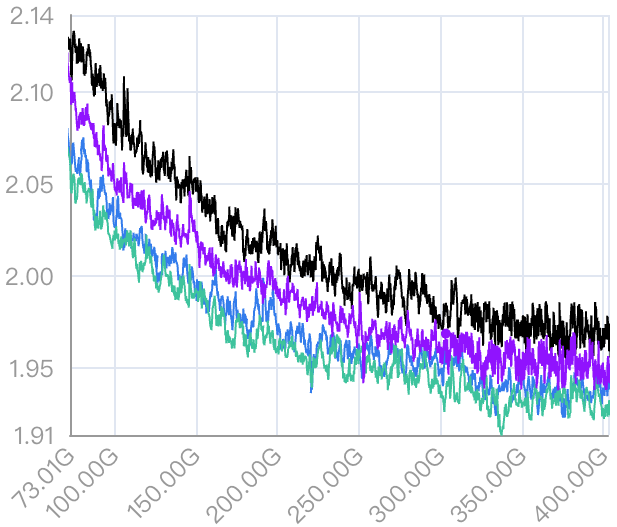}
\caption{Training loss for MOE-1B-A0d2B with four optimizers: AdamW(black), AdamW-Acc(purple), Muon(blue), and MONA(green).}
\label{fig:200m-adamw-muon-2acc-train}
\end{figure}

\begin{figure}[h!]
\centering
\includegraphics[width=0.85\linewidth]{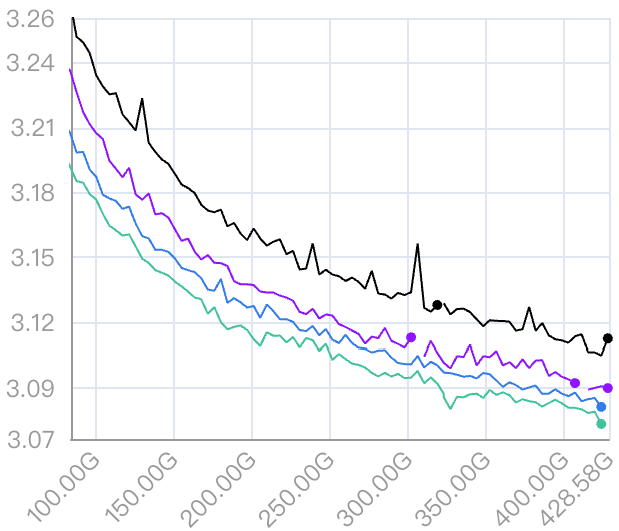}
\caption{Validation loss on General-English-Text for MOE-1B-A0d2B. MONA outperforms Muon, which in turn outperforms AdamW-Acc and AdamW.}
\label{fig:200m-adamw-muon-2acc-valid-en_book}
\end{figure}

\end{document}